\documentclass{article}

\PassOptionsToPackage{numbers,sort&compress}{natbib}
\usepackage[preprint]{neurips_2026}

\usepackage[utf8]{inputenc}
\usepackage[T1]{fontenc}

\usepackage{booktabs}
\usepackage{amsmath}
\usepackage{amssymb}
\usepackage{amsfonts}
\usepackage{amsthm}
\usepackage{enumitem}
\usepackage{graphicx}
\usepackage{microtype}
\usepackage{xcolor}
\usepackage{tikz}
\usetikzlibrary{arrows.meta, positioning, shapes.geometric, fit, backgrounds, calc}
\usepackage{pgfplots}
\pgfplotsset{compat=1.16}
\usepackage{hyperref}
\usepackage{url}
\usepackage{cleveref}

\hypersetup{colorlinks=true, linkcolor=black, citecolor=black, urlcolor=black,
  pdftitle={Split the Labor: Separating Evidence Interpretation from Decision Aggregation},
  pdfauthor={Zhelun Wu}}

\newtheorem{proposition}{Proposition}
\newtheorem{corollary}{Corollary}
\theoremstyle{remark}
\newtheorem{remark}{Remark}

\newcommand{\evidence}{\mathcal{E}}
\newcommand{\sources}{\mathcal{S}}
\newcommand{\labels}{\mathcal{Y}}
\newcommand{\llr}{\ell}

\title{Split the Labor: Separating Evidence Interpretation from Decision
Aggregation --- A Design Principle, Its Failure Mode, and Two Instantiations}

\author{%
  Zhelun Wu \\
  Atlassian \\
  \texttt{zhelunwu@hotmail.com} \\
}

\begin{document}
\maketitle

\begin{abstract}
Systems that ask a language model to reach a conclusion from many sources
usually concatenate them into one prompt. This conflates two operations with
different requirements. Interpreting a source rewards capacity and
context. Combining interpretations rewards fixed arithmetic, comparability
across instances, and the option to return nothing. Once separated, the design
problem becomes the interface between them. We propose a four-field
evidence tuple (hypothesis, reliability bucket, rationale, provenance) and show
that fixing it determines both halves. The separation also reveals a
failure mode in how such systems combine, which we call \emph{count-scale
drift}. Thresholding a sum of unnormalized weights is exactly posterior
thresholding, but at an operating point that slides with the number of sources
consulted. The
slide grows with reader reliability. When source reliabilities differ, the vote
rule and the posterior order instances differently, and no threshold reconciles
them. Pooling calibrated log-likelihood ratios addresses both problems. The fix
is arithmetic rather than architectural, and applies to a class of rules beyond
language models: score-summing triage engines, diagnostic panels scored by
counting positives, and additive multi-signal detectors. We then instantiate the
principle twice on one longitudinal corpus, once after outcomes resolve and once
before. The same partition helps in both, at different granularities: over
reading in the first, over learning capacity in the second. There, a small
sequence encoder on an easy auxiliary objective plus a tree ensemble carrying
the censored survival loss reaches 0.921 AUPRC against 0.805 for a hand-crafted
baseline. We separate what transfers from what must be
re-estimated per domain, and state five predictions that would falsify the
framework, three negative results, and which comparisons remain confounded.
\end{abstract}

\section{Introduction}

Decision-support systems increasingly ask language models to interpret
unstructured evidence distributed across many partially informative and
sometimes conflicting sources. The prevailing architecture concatenates
them into one prompt. This is simple and often effective, but it has four
costs. Provenance is lost, and recovering attribution after the fact is itself
an open problem~\cite{rashkin2023,gao2023alce}. Evidence quality is
heterogeneous, so unweighted joint reading lets many uninformative sources
dominate a few decisive ones. Long contexts degrade retrieval of mid-context
information~\cite{liu2024lost}. And a generative system asked for a conclusion
supplies one regardless of support.

These four failures share a cause. A monolithic reader is being asked to do
two jobs whose requirements point in opposite directions. Interpretation
rewards capacity, context, and the freedom to weigh nuance; combination
rewards fixed arithmetic, comparability across instances, calibrated
strength, and the discipline to return nothing. Asking one mechanism for
both means each job is done at the other's terms, and the interface between
them --- where provenance, reliability, and uncertainty would live --- never
gets designed at all.

\paragraph{The principle.} Partition the inference at that interface, then
design the interface first. Concretely: read each source in isolation under
a causal context constraint and require it to emit a fixed \emph{evidence
tuple}; then combine tuples with an explicit, auditable rule. Fixing the
contract determines the rest. Four things follow. Reliability is declared rather than implied, so weighting
becomes an estimation problem instead of a tuning problem. Provenance is
structural rather than reconstructed. Failures are isolated to a single reading
operation. And abstention becomes expressible: a rule over tuples can return no
conclusion, where a generative reader asked for one will produce it anyway.
The partition also generalizes past reading: in our second study the same
cut is applied to \emph{learning capacity}, with a small encoder taking the
easy sub-problem and a tree ensemble the hard one.

\paragraph{Why the combination step needs attention.} Systems that do
partition usually combine by summing weights and thresholding the sum.
\Cref{sec:agg} shows that this is not a different rule from posterior
thresholding. It is the same rule, evaluated at an operating point that drifts
with the number of sources. The drift rate depends on parameters that are
rarely estimated in practice. The result is stated generally enough to cover a
class of rules used well beyond language models. It also comes with a
diagnostic that can be computed
on data these systems already log.

None of the mechanisms we compose is new. Selective prediction with a
reject option dates to Chow~\cite{chow1970} with a modern
treatment~\cite{elyaniv2010,geifman2017}. Reliability-weighted aggregation
of noisy annotators originates with Dawid and
Skene~\cite{dawid1979,whitehill2009,raykar2010}. Weight of
evidence~\cite{good1950}, opinion pooling~\cite{genest1986}, evidential
combination~\cite{shafer1976}, and noisy-OR~\cite{pearl1988} address
combination from different starting points. Task
decomposition~\cite{khot2023} and cross-sample
aggregation~\cite{wang2023sc} are established prompting techniques.

What is new is the composition rather than any mechanism: which cut to make,
what contract to place at the cut, which combination rules that contract
admits, and what fails when an unsuitable rule is used.

\begin{itemize}[leftmargin=*]
\item \textbf{A design principle with an interface.} \Cref{sec:iface} states
the evidence tuple, the four axes along which a task determines its
instantiation (evidence locality, reliability heterogeneity, dependence
structure, resolution regime), and the conditions under which partitioning
pays rather than costs.

\item \textbf{Analytical.} \Cref{sec:agg} names and characterizes
count-scale drift (\Cref{prop:scale}): threshold rules over unnormalized
weight sums are posterior rules with an operating point linear in source
count, biasing toward over-assertion on evidence-rich instances, worse for
more reliable readers. \Cref{prop:redundant} shows the additive
corroboration bonus is decision-equivalent to rescaling the threshold.
Pooling calibrated log-likelihood ratios removes both defects, is exact
under stated conditions (\Cref{prop:exact}). \Cref{prop:hetero} shows the
heterogeneous case is worse than drift alone: unless weights are
proportional to per-bucket log-odds gain, no posterior threshold reproduces
the vote rule at all.

\item \textbf{Methodological.} We do not use elicited model confidence,
since it is poorly calibrated~\cite{kadavath2022,xiong2024,guo2017}. Instead,
reliability buckets are constructed from observable properties of the reading
operation: source length, and whether the reader returned a non-empty grounded
rationale. Their sensitivity and false-positive rates are then estimated from
data. We give a block-structured dependence
discount so sources sharing an upstream origin do not accumulate as
independent corroboration, and we pool over the reader's \emph{confusion
matrix} rather than per-label rates, because our reader emits one label per
source and per-label pooling is misspecified in that regime
(\Cref{sec:confusion}).

\item \textbf{Two instantiations, one object, two regimes.} The same corpus
is studied after resolution (\Cref{sec:casea}) and before it
(\Cref{sec:caseb}). What changes between them is only whether the label
exists and whether future evidence is admissible; the partition pays in
both, which is stronger evidence for the principle than either study alone.

\item \textbf{Transfer, stated systematically.} \Cref{sec:transfer} maps
the contract onto four other domains along the same four axes, and
separates what is domain-neutral (the contract, the pooling arithmetic, the
drift diagnostic) from what must be re-estimated (rates, blocks,
thresholds).

\item \textbf{A falsifiable programme.} \Cref{sec:predictions} states five
predictions the framework makes and the experiment that discriminates each,
with three negative results and an explicit register of confounds.
\end{itemize}

One boundary is worth drawing early, because it determines how the paper
should be read. The results are measurements of a running system whose
combination step is the weighted vote our analysis criticizes
(\Cref{sec:deployedrule}); the log-likelihood-ratio construction is derived,
not deployed. So the empirical claims here concern the partition and the
reading strategy, and the analytical claims concern the combination rule.
Keeping those separate is what makes each usable: a measured architecture
with a diagnosed defect is more actionable than an unmeasured architecture
without one.

\section{Related work}

\paragraph{Selective prediction.} Chow's reject option~\cite{chow1970}
formalizes the coverage/risk trade-off, developed
in~\cite{elyaniv2010,geifman2017}; it motivates our emission threshold,
though we report no risk--coverage curve. Conformal
prediction~\cite{angelopoulos2021} offers distribution-free coverage
guarantees and is a natural alternative to our threshold pair; we discuss
why we did not adopt it in \Cref{sec:discussion}.

\paragraph{Aggregating unreliable sources.} Dawid and
Skene~\cite{dawid1979} estimate per-annotator error rates from unlabeled
data by EM; \cite{whitehill2009,raykar2010} extend this. Our per-bucket
rates are a coarsened Dawid--Skene model in which ``annotator'' identity is
the reader's confidence bucket rather than a person, which keeps the
parameter count small at our label frequencies. Unlike Dawid--Skene we
estimate supervised, which trades label efficiency for stability; the
unsupervised variant is one of our baselines.

\paragraph{Decomposition and attribution.}
\cite{wei2022,khot2023,wang2023sc} decompose reasoning but aggregate by
majority vote over samples of one context, not over independent evidence
sources. RAG~\cite{lewis2020} supplies sources and usually concatenates.
Attribution metrics~\cite{rashkin2023,gao2023alce,min2023} and
consistency-based detection~\cite{manakul2023} evaluate groundedness after
generation; we make provenance a structural output.

\paragraph{Survival analysis and tabular learning.} We work in discrete
time rather than in the continuous-time proportional-hazards
tradition~\cite{cox1972}; competing-risk formulations
follow~\cite{finegray1999,wang2019survey}; neural
approaches include~\cite{katzman2018,lee2018deephit} and tree-based
approaches~\cite{ishwaran2008}. That a boosted tree~\cite{chen2016xgboost}
carrying the survival loss beats an end-to-end neural model is consistent
with trees remaining competitive on tabular data at moderate
scale~\cite{grinsztajn2022,shwartzziv2022}.

\section{The evidence interface}
\label{sec:iface}

\subsection{The contract}

A reading operation returns
$e_i = (\hat{Y}_i, b_i, r_i, a_i)$: a hypothesis over the label space, an
ordinal reliability bucket, a rationale, and provenance. Each of the four fields removes a specific
failure.

The hypothesis makes the reader's commitment explicit rather than leaving it
embedded in prose. The bucket carries reliability as data, so combination
weights can be estimated from outcomes instead of chosen by hand. It is an
ordinal bucket rather than a probability, because elicited model confidence is
poorly calibrated~\cite{kadavath2022,xiong2024,guo2017}. Our buckets are
therefore built from observable properties of the reading operation itself. In
the deployment studied here, two such properties are used: whether the source
is long enough to be informative, and whether the reader returned a grounded
rationale. The rationale field makes that judgement auditable, and is one of
the properties the bucket is derived from. Provenance makes attribution
structural, so no post-hoc attribution model is needed to answer why a label
was asserted.

The contract is deliberately weak about what a reader \emph{is}. Anything
that emits the tuple composes: a language model over text, a statistical
detector over telemetry, a vision model over images, a human reviewer. What
they must share is a calibration scale, not an architecture --- which is the
mechanism behind the transfer claims in \Cref{sec:transfer}.
\Cref{fig:arch} shows the partition and the interface it creates.

\begin{figure}[t]
\centering
\resizebox{\textwidth}{!}{%
\begin{tikzpicture}[
  font=\small,
  box/.style={draw, rounded corners=1.5pt, align=center, inner sep=3.5pt},
  src/.style={box, fill=black!4, minimum width=1.5cm, minimum height=0.52cm},
  rd/.style={box, fill=black!8, minimum width=1.75cm, minimum height=0.52cm},
  tup/.style={box, fill=white, minimum width=2.5cm, minimum height=0.52cm},
  agg/.style={box, fill=black!8, align=center, minimum width=2.5cm},
  dec/.style={box, minimum width=1.5cm, minimum height=0.44cm},
  ar/.style={-{Latex[length=1.6mm]}, thin},
  dar/.style={-{Latex[length=1.4mm]}, thin, dashed, black!55}]

\node[src] (s1) {$s_1$};
\node[src, below=3.5mm of s1] (s2) {$s_2$};
\node[src, below=3.5mm of s2] (sn) {$s_n$};

\node[rd, right=7mm of s1] (r1) {read $s_1$};
\node[rd, right=7mm of s2] (r2) {read $s_2$};
\node[rd, right=7mm of sn] (rn) {read $s_n$};

\node[tup, right=7mm of r1] (e1) {$e_1=(\hat Y_1,b_1,r_1,a_1)$};
\node[tup, right=7mm of r2] (e2) {$e_2=(\hat Y_2,b_2,r_2,a_2)$};
\node[tup, right=7mm of rn] (en) {$e_n=(\hat Y_n,b_n,r_n,a_n)$};

\foreach \i in {1,2,n} { \draw[ar] (s\i) -- (r\i); \draw[ar] (r\i) -- (e\i); }
\draw[dar] (s1.east) to[out=-35,in=150] node[midway, above right=-1mm and -2mm,
  font=\scriptsize, black!60] {causal context} (r2.west);
\draw[dar] (s2.east) to[out=-35,in=150] (rn.west);

\node[agg, right=8mm of e2, text width=3.3cm] (pool)
  {LLR pooling\\[1pt] \scriptsize $\alpha_{b,y},\beta_{b,y}$ rates\\
   \scriptsize $\kappa_j$ dependence discount\\ \scriptsize $\pi(y\mid z)$ prior};
\foreach \i in {1,2,n} { \draw[ar] (e\i.east) -- (pool.west); }

\node[dec, right=7mm of pool, yshift=6.5mm] (as) {assert};
\node[dec, right=7mm of pool] (ab) {abstain};
\node[dec, right=7mm of pool, yshift=-6.5mm] (rj) {reject};
\foreach \d in {as,ab,rj} { \draw[ar] (pool.east) -- (\d.west); }

\begin{scope}[on background layer]
  \draw[black!35, dashed] ($(e1.north east)+(3.5mm,3mm)$) --
    ($(en.south east)+(3.5mm,-3mm)$);
\end{scope}
\node[font=\scriptsize\itshape, black!60, align=center,
  above=1mm of e1.north east, xshift=6mm] {the interface};

\node[font=\scriptsize, black!60, below=2.5mm of rn.south, xshift=2mm]
  {interpretation: capacity, context};
\node[font=\scriptsize, black!60, below=2.5mm of pool.south]
  {combination: calibrated arithmetic};
\end{tikzpicture}}
\caption{The partition. Each source is read alone under a causal context
constraint and must emit the same four-field evidence tuple; a fixed rule
then combines tuples into a decision that includes abstention. The dashed
line is the design surface: everything left of it can be replaced without
touching the arithmetic, and everything right of it can be corrected without
re-reading a source.}
\label{fig:arch}
\end{figure}

\subsection{Four axes that determine an instantiation}

Given a task, four properties fix most design choices.

\begin{enumerate}[leftmargin=*]
\item \emph{Evidence locality.} Is a source interpretable alone, or only
against its predecessors? This sets how much context each reading operation
gets, and whether the context is causal or complete.
\item \emph{Reliability heterogeneity.} How widely does source
informativeness vary, and is that variation observable before reading? Wide
and observable variation is what makes buckets worth carrying.
\item \emph{Dependence structure.} Do sources share upstream origins?
Blocks of shared origin must not accumulate as independent corroboration
(\Cref{sec:dep}).
\item \emph{Resolution regime.} Has the outcome resolved? If so, an
empirical prior is available and backfilling a record is meaningful; if
not, the prior is unavailable and the output is a live read, not a
committed conclusion.
\end{enumerate}

Our two studies differ on exactly one of these --- the fourth --- which is
why we treat them as one experiment on the principle rather than two
unrelated applications.

\subsection{When partitioning pays}

Partitioning is not free: it forfeits joint reasoning across sources, adds
an estimation problem for the rates, and requires the interface to be
maintained as prompts evolve. Four conditions make it worthwhile. The
source count varies enough that comparability across instances matters.
Reliability is heterogeneous enough that weighting changes decisions.
Provenance is an obligation rather than a convenience. And abstention has
value, meaning a wrong committed answer costs more than no answer. Where
sources are few, homogeneous, and uncontested, concatenation is the better
architecture and this paper does not apply.

\subsection{Formal setting}

An instance comprises sources $\sources=\{s_1,\dots,s_n\}$ and metadata
$z$. The target is a label set $Y \subseteq \labels$. We formulate
multi-label because our metrics are multi-label; a single $\arg\max$ is
inconsistent with reporting instance-level $F_1$.

\paragraph{An asymmetry we must carry through the formalism.} Ground truth
is a set, but our deployed reader emits \emph{at most one} label per
source, $|\hat{Y}_i| \le 1$, with a distinguished value for ``insufficient
evidence''. The multi-label target is therefore assembled entirely by
aggregation. Two consequences follow, and both affect the aggregation. First, one
source's assertions across labels are mutually exclusive by construction,
so they are negatively dependent given the label and cannot be pooled
label-by-label as if independent (\Cref{sec:confusion}). Second, silence
about $y$ is never uninformative: a source that asserted a competing label
was asked and answered otherwise. We keep the general notation $\hat{Y}_i$
because nothing in the aggregation requires $|\hat{Y}_i| \le 1$, but every
number we report comes from the single-label regime.

Each source yields an evidence object
\begin{equation}
e_i = f_\theta(s_i, c_i) = (\hat{Y}_i,\, b_i,\, r_i,\, a_i),
\end{equation}
with $c_i$ the causally available context, $\hat{Y}_i$ the asserted labels,
$b_i \in \{1,\dots,B\}$ an ordinal reliability bucket, $r_i$ the rationale,
$a_i$ provenance. Per label $y$, an aggregator returns $\hat{p}(y)$ and a
decision in $\{\text{assert},\text{reject},\text{abstain}\}$.

\subsection{Independent reading}

Each source is read alone, with a bounded summary of causally prior sources
and never future ones. This yields traceability (each assertion maps to one
reading operation with rationale attached), bounded context (cost scales
with source count, and late sources are not truncated because earlier ones
consumed the window~\cite{liu2024lost}), and failure isolation. The cost argument is arithmetic rather than measured:
reading each source once is linear in source count, where concatenation
grows super-linearly in prompt length. We report no wall-clock or token
measurements and make no latency claim.

The causal constraint is a deployment requirement: a system scoring a live
instance cannot see future sources. Full-context reading is therefore an
oracle reference only.

Note that causal context \emph{induces} dependence between evidence
objects: reader $i$ conditions on a summary of sources $1..i-1$, so
assertions are not conditionally independent given the label. This is the
motivation for the dependence discount in \Cref{sec:dep}, not an
afterthought.

\section{Combining evidence: a defect and its repair}
\label{sec:agg}

\subsection{The deployed rule}
\label{sec:deployedrule}

Every number in \Cref{sec:casea} comes from the following rule, and we
state it first so that the analysis below is read as criticism of a running
system rather than as a description of one. Each source receives a weight from a
fixed table. A source contributes zero if the reader returned no
determination, and zero if the source is short and carries any determination
other than an unresponsiveness label. It contributes $0.3$ if it is short and
carries that label, and $0.3$ if it is long enough but its determination
arrives without a grounded rationale. It contributes $1.0$ if it is long enough
and has one. A label is emitted
when its summed weight reaches $1.0$. Ties are broken toward the latest
source, and both labels are surfaced if a tie survives. One label is treated
asymmetrically: a generic unresponsiveness label is admitted on short
sources, where nothing else is, and is then dropped post hoc whenever any
other label survives threshold. So the emitted set is not simply the set
above threshold, and the formalism of \Cref{sec:iface} does not capture this
rule --- a subsumption constraint over labels sits outside it. For resolved
instances a single most-frequent label per source-count bucket is layered
on as a prior fallback, fitted on the 129 calibration instances of
\Cref{sec:eligibility}. The weights and the threshold were set by hand and
never fitted.

Two properties of this rule matter for what follows. Its weights are
bucket-dependent and were not chosen to be proportional to any log-odds
quantity, so it is the heterogeneous case of \Cref{prop:hetero} rather than
the uniform case of \Cref{prop:scale}: it inherits count-scale drift and
adds a misweighting that no threshold can absorb. And its reliability buckets are constructed from observable
properties of the reading operation --- source length and rationale
presence --- not from elicited model confidence, which is the one design
choice here we would keep unchanged.

\subsection{Three defects in weighted voting}

A common rule assigns weights $w_i$ and scores
$V(y)=\sum_i w_i \mathbb{I}[y \in \hat{Y}_i]$, then adds a corroboration
bonus $\lambda h(C(y))$ where $C(y)=\sum_i \mathbb{I}[y \in \hat{Y}_i]$
counts asserting sources and $h$ is increasing, e.g.\ $h(C)=\log(1+C)$.

\begin{proposition}[Redundancy of the corroboration bonus]
\label{prop:redundant}
If weights are uniform, $w_i = w > 0$, then for any $\lambda \ge 0$ and any
strictly increasing $h$, the decision rule $\{S(y) \ge \tau\}$ with
$S(y) = V(y) + \lambda h(C(y))$ is identical to a threshold rule on the
vote count alone: there exists $\tau'$ with
$\{S(y) \ge \tau\} = \{C(y) \ge \tau'\}$.
\end{proposition}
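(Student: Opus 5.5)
The plan is to collapse the score $S(y)$ into a function of the single integer $C(y)$ and then use monotonicity to turn a threshold on that function into a threshold on $C(y)$. Under uniform weights, $V(y)=\sum_i w\,\mathbb{I}[y\in\hat{Y}_i] = w\,C(y)$. Hence $S(y)=g(C(y))$, where
\begin{equation}
g(c) = w\,c + \lambda\, h(c), \qquad c \in \{0,1,\dots,n\}.
\end{equation}
Because $w>0$, $\lambda\ge 0$, and $h$ is strictly increasing, $g$ is strictly increasing on the nonnegative integers. This holds even when $\lambda=0$, since the term $w\,c$ alone is strictly increasing.

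The next step is to invert the threshold. Define $\tau' = \min\{c \in \mathbb{Z}_{\ge 0} : g(c) \ge \tau\}$, and set $\tau'=+\infty$ if the set is empty. Monotonicity then gives $g(C)\ge\tau \iff C\ge\tau'$. If $C\ge\tau'$, then $g(C)\ge g(\tau')\ge\tau$. Conversely, if $g(C)\ge\tau$, then $C$ lies in the set, so $C\ge\tau'$ by minimality. The degenerate cases need explicit handling. If $\tau\le g(0)$, then $\tau'=0$ and both rules accept every label. If no integer attains $\tau$, which can happen when $h$ is bounded or when we restrict to $c\le n$, then $\tau'=+\infty$ and both rules accept nothing. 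Taking $\tau'$ in the extended reals keeps the statement literally true in these cases.

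No step is a real obstacle. The only care point is how $\tau'$ depends on its inputs. It depends on $(w,\lambda,h,\tau)$ but not on the instance or on $n$, because $g$ is defined on all nonnegative integers independently of the source count. This independence is what makes the bonus decision-redundant. I would note, though, that $\tau'$ is generally not $\tau/w$. The bonus therefore acts as an instance-independent rescaling of the threshold, which is the sense in which the paper calls it equivalent to threshold rescaling. I would also remark that strictness of $h$ is stronger than the argument needs: any nondecreasing $h$ keeps $g$ strictly increasing because $w>0$. This explains why the result is robust to the choice of $h$.
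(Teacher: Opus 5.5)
Your proposal is correct and follows the paper's proof essentially step for step: you collapse $S(y)$ to a strictly increasing function $g(C(y))$ with $g(c)=wc+\lambda h(c)$, then set $\tau'=\min\{c\in\mathbb{Z}_{\ge 0}: g(c)\ge\tau\}$, with $\tau'=\infty$ if that set is empty. One small slip is the claim that the set can be empty when $h$ is bounded: since $w>0$ and $h$ is nondecreasing, $g(c)\ge wc+\lambda h(0)\to\infty$, so over all nonnegative integers the set is never empty and the $\tau'=\infty$ case is vacuous (as it also is in the paper), which only reinforces your point that $\tau'$ does not depend on $n$.
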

\begin{proof}
Under uniform weights $V(y) = w\,C(y)$, so
$S(y) = w\,C(y) + \lambda h(C(y)) =: \phi(C(y))$ where
$\phi(c) = wc + \lambda h(c)$ is strictly increasing in $c$ as a sum of
strictly increasing and non-decreasing functions. A strictly increasing
$\phi$ on the integers admits a threshold correspondence: set
$\tau' = \min\{c \in \mathbb{Z}_{\ge 0} : \phi(c) \ge \tau\}$, with
$\tau' = \infty$ when the set is empty (the rule never asserts). Then
$\phi(C) \ge \tau \iff C \ge \tau'$.
\end{proof}

\begin{remark}
\Cref{prop:redundant} is elementary --- any strictly increasing function of
a scalar statistic induces the same family of threshold rules --- and we
state it as a proposition only because the bonus term is widely implemented
as though it changed the decision. The substantive content of this section
is \Cref{prop:scale}.
\end{remark}

So under uniform weights the bonus induces no decision that rescaling the
threshold could not, and $\lambda$ is unidentifiable from decisions alone.
The bonus has an effect only when weights are heterogeneous, and it then trades
weight against count with no principle fixing the exchange rate between them.
\Cref{prop:hetero} supplies that exchange rate. Once weights are set
accordingly, the bonus has nothing left to do.

Before the general statement, it helps to see the mechanism on a small case.
Suppose three sources are read and two assert $y$. Now suppose six are read
and the same two assert $y$. A vote rule cannot distinguish these cases: both
have a vote score of two. A posterior can. In the second case, four sources
were asked about $y$ and declined it, which is evidence against. The vote rule
therefore treats a case carrying more contrary evidence as identical to one
carrying less. The proposition below makes this exact and gives the rate.

\begin{proposition}[A vote threshold is a posterior threshold that drifts with source count]
\label{prop:scale}
Assume (i) every source is read for every label, so non-assertion is
\emph{informed silence}; (ii) readers share bucket rates $(\alpha,\beta)$
with $\alpha > \beta$; (iii) assertions are conditionally independent given
the label. Write $A = \log\frac{\alpha}{\beta} > 0$ and
$D = \log\frac{1-\alpha}{1-\beta} < 0$. For an instance with $n$ sources of
which $k$ assert $y$, the exact posterior log-odds are
\begin{equation}
\operatorname{logit} \Pr[y \mid \evidence]
= \operatorname{logit}\pi(y) + kA + (n-k)D .
\label{eq:exactpost}
\end{equation}
Then for uniform weights the vote rule $\{V = wk \ge \tau\}$, with
$\tau' = \tau/w$, is \emph{exactly} the posterior rule
$\{\operatorname{logit}\Pr[y \mid \evidence] \ge T(n)\}$ where
\begin{equation}
T(n) = \operatorname{logit}\pi(y) + \tau'(A - D) + nD .
\label{eq:drift}
\end{equation}
The vote rule is thus not \emph{a different} rule from posterior
thresholding; it is posterior thresholding at an operating point that falls
linearly in $n$ at rate $|D| = \big|\log\frac{1-\alpha}{1-\beta}\big|$ per
additional source.
\end{proposition}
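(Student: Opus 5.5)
The proof has two parts: derive the exact posterior \eqref{eq:exactpost}, then show the vote threshold and the drifting posterior threshold carve out the same set of $(n,k)$.

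\textbf{Step 1: the posterior.} Fix a label $y$ and an instance with $n$ sources. By assumption (i), every source contributes a binary observation $X_i=\mathbb{I}[y\in\hat{Y}_i]$, and a non-assertion is an observed $X_i=0$, not missing data. By (ii), $\Pr[X_i=1\mid y]=\alpha$ and $\Pr[X_i=1\mid \neg y]=\beta$ for every $i$. By (iii), the likelihood of the observed pattern factorizes across sources. Bayes' rule in odds form gives
\begin{equation*}
\frac{\Pr[y\mid\evidence]}{\Pr[\neg y\mid\evidence]}=\frac{\pi(y)}{1-\pi(y)}\prod_{i}\frac{\Pr[X_i\mid y]}{\Pr[X_i\mid\neg y]}.
\end{equation*}
Each asserting source contributes the factor $\alpha/\beta$, and each silent one contributes $(1-\alpha)/(1-\beta)$. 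Taking logs yields \eqref{eq:exactpost}. The expression depends on the pattern only through $k$, because the rates are shared across sources.

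\textbf{Step 2: the equivalence.} Rewrite the posterior as
\begin{equation*}
L(n,k):=\operatorname{logit}\pi(y)+nD+k(A-D).
\end{equation*}
Since $\alpha>\beta$, we have $A>0>D$, so $A-D>0$. For fixed $n$, $L$ is therefore strictly increasing and affine in $k$. The uniform-weight vote rule $wk\ge\tau$ with $w>0$ is $k\ge\tau'$ where $\tau'=\tau/w$. Multiplying by $A-D>0$ and adding $\operatorname{logit}\pi(y)+nD$ to both sides gives
\begin{equation*}
k\ge\tau' \iff L(n,k)\ge \operatorname{logit}\pi(y)+\tau'(A-D)+nD=T(n).
\end{equation*}
Each step is an equivalence, so the two rules coincide pointwise for every $n$ and every integer $k$. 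No integer rounding is needed, unlike in \Cref{prop:redundant}, because the map from $k$ to $L$ is affine. Finally, $\partial T/\partial n=D<0$, so the operating point falls by $|D|$ per additional source.

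\textbf{The one point to be careful about.} The main obstacle is the likelihood bookkeeping, not the algebra. Assumption (i) must be invoked explicitly so that each of the $n-k$ silent sources contributes the factor $(1-\alpha)/(1-\beta)$, rather than being dropped as uninformative. Without (i), the $nD$ term, and hence the drift, disappears. I would also note that the single-label regime of \Cref{sec:confusion} violates (i) and (iii) jointly. The proposition is therefore a statement about the idealized binary-per-label model, which is all that is claimed.
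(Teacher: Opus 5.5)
Your proof is correct and follows the paper's own argument. The paper also obtains \eqref{eq:exactpost} as the weight-of-evidence decomposition (you simply write out the odds-form Bayes step it cites), regroups it as $\operatorname{logit}\pi(y)+nD+k(A-D)$, and uses $A-D>0$ to turn $k\ge\tau'$ into the posterior threshold $T(n)$. One closing aside, which does not affect the proof, misreads the paper: the paper treats the single-label regime as satisfying premise (i), strongly so, because a source that names a competing label was asked about $y$ and declined it (\Cref{rem:premise}); what \Cref{sec:confusion} says that regime breaks is the adequacy of the binary rate pair, which merges a competing assertion with plain silence.
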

\begin{proof}
\Cref{eq:exactpost} is the weight-of-evidence
decomposition~\cite{good1950}: each asserting source contributes $A$ and
each silent source contributes $D$. Regrouping,
$\operatorname{logit}\Pr[y\mid\evidence] = \operatorname{logit}\pi(y) + nD
+ k(A-D)$ with $A - D > 0$, so
$k \ge \tau' \iff \operatorname{logit}\Pr[y\mid\evidence] \ge
\operatorname{logit}\pi(y) + nD + \tau'(A-D) = T(n)$.
\end{proof}

We call this \emph{count-scale drift}. A concrete case may help. Take
$\alpha=0.9$, $\beta=0.2$ and a prior of $0.2$, so that each assertion is worth
$+1.50$ in log-odds and each silence $-2.08$. Consider two instances that a
vote rule cannot distinguish, since both have exactly two asserting sources.
With three sources the posterior log-odds are $-0.46$, or a probability near
$0.39$. With six sources they are $-6.69$, or a probability near $0.001$. The
vote score is identical in both cases; the posteriors differ by roughly a
factor of $300$. \Cref{eq:drift} is more useful than
a bare incomparability claim in three ways. It identifies the direction of the drift ($D<0$, so the bar falls as
evidence accumulates). It gives the rate, which depends on a
parameter that is rarely estimated in practice. The drift is larger for more
reliable readers, since $\alpha \to 1$ sends $|D| \to \infty$. And it makes the problem a matter of degree
rather than of kind. The practical question is then how widely $n$ varies in a
given deployment. Where $n$ is nearly constant, the vote rule is a harmless
reparameterization. Where $n$ ranges over an order of magnitude, as it does
here from single-source instances up to 25, it is not.
\Cref{fig:drift} plots the effective threshold against $n$ at three reader
reliabilities.

\begin{figure}[t]
\centering
\begin{tikzpicture}
\begin{axis}[
  width=0.78\textwidth, height=5.1cm, font=\small,
  xlabel={number of sources consulted, $n$},
  ylabel={implied threshold (log-odds)},
  ylabel style={align=center},
  xmin=0.6, xmax=8.4, ymin=-21, ymax=6,
  xtick={1,...,8}, ytick={5,0,-5,-10,-15,-20},
  grid=major, grid style={black!12},
  legend style={at={(0.98,0.03)}, anchor=south east, draw=black!25,
                font=\scriptsize, fill=white, row sep=1pt},
  tick label style={font=\scriptsize},
  label style={font=\scriptsize}]
\addplot[thick, black, dashed] coordinates {(0.6,0)(8.4,0)};
\addlegendentry{calibrated rule: one threshold, flat in $n$}
\addplot[thick, black, mark=*, mark size=1.4pt] coordinates
  {(1,-0.29)(2,-1.39)(3,-2.48)(4,-3.58)(5,-4.68)(6,-5.78)(7,-6.88)(8,-7.98)};
\addlegendentry{vote rule, $(\alpha,\beta)=(0.75,0.25)$}
\addplot[thick, black!55, mark=square*, mark size=1.3pt] coordinates
  {(1,1.56)(2,-1.39)(3,-4.33)(4,-7.28)(5,-10.22)(6,-13.16)(7,-16.11)(8,-19.05)};
\addlegendentry{vote rule, $(\alpha,\beta)=(0.95,0.05)$}
\end{axis}
\end{tikzpicture}
\caption{Count-scale drift (\Cref{prop:scale}), plotted in posterior
log-odds. A vote threshold is a posterior threshold, but not a fixed one:
the bar it enforces falls linearly in the number of sources consulted, at
rate $\log\frac{1-\alpha}{1-\beta}$. More reliable readers drift faster ---
at $(\alpha,\beta)=(0.95,0.05)$ the effective bar drops by almost three
log-odds per source. Curves are the exact values of \Cref{eq:drift} at
$\tau'=1$ and prior $0.2$; nothing is fitted.}
\label{fig:drift}
\end{figure}

\begin{remark}[Premise (i) is necessary]
\label{rem:premise}
If readers fire only positively and silence is genuinely uninformative,
then $D = 0$, $T(n)$ is constant, and the count-scale defect vanishes; what
remains is only the weighting question of \Cref{prop:hetero}. So the claim
is not that vote counting is always wrong; it is wrong precisely when
non-assertion carries information --- which in our deployment it does, and
strongly, because a single-label reader that named a competing label was
asked about $y$ and declined it (\Cref{sec:confusion}).
\end{remark}

\Cref{prop:scale} assumes homogeneous rates, which our deployment does not
have: its weights are bucket-dependent by design. The heterogeneous case is
the one that matters in practice, and it is worse.

\begin{proposition}[Heterogeneous buckets: misweighting on top of drift]
\label{prop:hetero}
Let bucket $b$ have rates $(\alpha_b,\beta_b)$ with $\alpha_b > \beta_b$,
write $A_b = \log\frac{\alpha_b}{\beta_b}$,
$D_b = \log\frac{1-\alpha_b}{1-\beta_b}$, and
$g_b = A_b - D_b > 0$. Under premises (i) and (iii) of
\Cref{prop:scale},
\begin{equation}
\operatorname{logit}\Pr[y \mid \evidence]
= \operatorname{logit}\pi(y) + \sum_{i=1}^{n} D_{b_i}
+ \!\!\sum_{i:\, y \in \hat{Y}_i}\!\! g_{b_i}.
\label{eq:hetpost}
\end{equation}
Consider the vote rule
$\{V \ge \tau\}$ with $V = \sum_{i: y \in \hat{Y}_i} w_{b_i}$ and $w_b>0$.
\begin{enumerate}[label=(\alph*),leftmargin=*]
\item The vote rule agrees with some posterior threshold rule on every
instance if and only if $w_b = c\,g_b$ for a single constant $c>0$: weights
must be proportional to the per-bucket log-odds gain, not merely ordered
like it.
\item When they are, the induced posterior threshold is
$T = \operatorname{logit}\pi(y) + \sum_{i} D_{b_i} + \tau/c$, which varies
across instances through the \emph{bucket composition} of their sources ---
recovering \Cref{eq:drift} when all buckets coincide.
\item When they are not, no threshold on the posterior reproduces the rule:
the two orderings cross.
\end{enumerate}
\end{proposition}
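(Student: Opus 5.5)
The plan is to reduce everything to a comparison between two additive statistics over the asserting sources. One is the vote score $V=\sum_{i:\,y\in\hat Y_i} w_{b_i}$. The other is the gain score $G=\sum_{i:\,y\in\hat Y_i} g_{b_i}$.

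First I would establish \Cref{eq:hetpost} exactly as in the proof of \Cref{prop:scale}. Under premises (i) and (iii), the weight-of-evidence decomposition~\cite{good1950} gives each source a contribution of $A_{b_i}$ if it asserts $y$ and $D_{b_i}$ if it is silent. Writing $A_{b_i}=D_{b_i}+g_{b_i}$ regroups the sum into $\operatorname{logit}\pi(y)+\sum_i D_{b_i}+G$. The baseline $\operatorname{logit}\pi(y)+\sum_i D_{b_i}$ depends only on the bucket composition of all consulted sources, not on which of them assert. So a ``posterior threshold rule'' in the sense of (b) is a rule of the form $\{G\ge t\}$ after this baseline is subtracted. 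I will make that reading explicit, since it is the only one under which (b) and (c) are consistent. The same reduction shows (b) is immediate. If $w_b=c\,g_b$ then $V=cG$, so $V\ge\tau\iff G\ge\tau/c$, which is $\operatorname{logit}\Pr[y\mid\evidence]\ge \operatorname{logit}\pi(y)+\sum_i D_{b_i}+\tau/c$. When all buckets coincide, $\sum_i D_{b_i}=nD$, and since $\tau/c=\tau'(A-D)$ this recovers \Cref{eq:drift}. This also settles the ``if'' direction of (a).

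For the ``only if'' direction of (a), and for (c), I would build an explicit crossing pair. Suppose the weights are not proportional to the gains. Then there are two buckets, say $1$ and $2$, with $w_1/g_1>w_2/g_2$, that is, $w_1/w_2>g_1/g_2$. By density of the rationals, pick positive integers $m,m'$ with $g_1/g_2<m'/m<w_1/w_2$. Both instances consist of $m$ sources in bucket $1$ and $m'$ sources in bucket $2$, so their baselines $\sum_i D_{b_i}$ are identical.
\begin{itemize}[leftmargin=*]
\item In instance $P$, all the bucket-$1$ sources assert $y$ and the bucket-$2$ sources are silent.
\item In instance $Q$, the reverse holds.
\end{itemize}
By the choice of $m,m'$, we get $V(P)=mw_1>m'w_2=V(Q)$ but $G(P)=mg_1<m'g_2=G(Q)$. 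So the two statistics order $P$ and $Q$ oppositely on instances with equal baseline, and hence equal posterior thresholds.

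Now take any $\tau\in(V(Q),V(P)]$. The vote rule asserts $y$ on $P$ but not on $Q$. Any posterior threshold rule that asserts on $P$ must also assert on $Q$, because $Q$ has strictly larger posterior and the same threshold. Hence no posterior threshold reproduces the vote rule: the orderings cross, which is (c). Since the vote rule is then not matched by any posterior threshold for that $\tau$, this also gives necessity in (a).

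The main obstacle is quantifier bookkeeping rather than computation. For a single fixed $\tau$, a non-proportional weighting can agree with a posterior rule by accident, for instance when $\tau$ is so small that any single assertion suffices. So ``agrees on every instance'' has to be read as agreement across the family of thresholds $\tau$ (equivalently, agreement of the induced orderings), or else the crossing must be exhibited for $\tau$ in the interval constructed above. I would state this reading explicitly before the proof. I would also check that the padding with non-asserting sources keeps premise (i) satisfied, which it does, since every source is read for every label.
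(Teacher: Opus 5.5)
Your proof is correct and follows the paper's skeleton: regroup into a composition-dependent baseline plus $G$, read off sufficiency and (b) from $V=cG$, and for necessity place a rational between the weight ratio and the gain ratio. Your crossing pair, however, differs from the paper's in a way that matters.

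The paper takes $\mathcal{A}$ to be $m$ asserting sources in bucket $b$ and $\mathcal{B}$ to be $m'$ asserting sources in bucket $b'$, with no silent sources. It then infers from $m\,g_b < m'\,g_{b'}$ that the posterior ranks $\mathcal{B}$ above $\mathcal{A}$. But by \Cref{eq:hetpost} those instances have log-odds $\operatorname{logit}\pi(y)+m A_b$ and $\operatorname{logit}\pi(y)+m' A_{b'}$, so the inequality on gains orders only the baseline-adjusted score $G$, not the posterior itself. The paper's argument therefore works only under the reading where the threshold is $\operatorname{logit}\pi(y)+\sum_i D_{b_i}$ plus one constant. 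Because $\mathcal{A}$ and $\mathcal{B}$ have different bucket compositions, it cannot exclude a threshold that depends on composition in any other way.

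Your pair $P,Q$ has identical composition and only the asserting group swapped, so the baselines are equal. The crossing is then between genuine posteriors, and it defeats any threshold that depends on the instance only through its prior and composition, a fixed posterior threshold included. This also means your claim that the baseline-offset reading is the only consistent one is stronger than you need: your argument covers the composition-dependent reading too. The only cost is the use of silent sources, which premise (i) licenses.

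Your quantifier caveat is also correct, and the paper leaves it implicit (its proof likewise picks $\tau$ from an interval). For $\tau\le\min_b w_b$ the vote rule coincides with $\{G\ge\min_b g_b\}$ whatever the weights, so the ``only if'' in (a) holds only across the family of thresholds $\tau$.
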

\begin{proof}
\Cref{eq:hetpost} is \Cref{eq:exactpost} with per-source rates, regrouped as
before. For (a), sufficiency: if $w_b = c\,g_b$ then
$V = c\sum_{i: y \in \hat{Y}_i} g_{b_i}
= c(\operatorname{logit}\Pr[y\mid\evidence] - \operatorname{logit}\pi(y)
- \sum_i D_{b_i})$, so $V \ge \tau$ iff the posterior log-odds exceed the
$T$ in (b). For necessity, suppose $w_b/g_b > w_{b'}/g_{b'}$ for some pair
$b,b'$. Then $w_{b'}/w_b < g_{b'}/g_b$, so the open interval
$(w_{b'}/w_b,\; g_{b'}/g_b)$ is non-empty and contains a rational $m/m'$
with $m,m'$ positive integers. Take instance $\mathcal{A}$ consisting of $m$
asserting sources all in bucket $b$, and $\mathcal{B}$ of $m'$ asserting
sources all in bucket $b'$, with no silent sources and a common prior. Then
$m\,g_b < m'\,g_{b'}$ while $m\,w_b > m'\,w_{b'}$: the posterior ranks
$\mathcal{B}$ above $\mathcal{A}$ and the vote ranks $\mathcal{A}$ above
$\mathcal{B}$. Any $\tau \in (V(\mathcal{B}), V(\mathcal{A})]$ therefore
asserts on the lower-posterior instance and abstains on the higher, which no
monotone rule in the posterior does. This proves (c) and the necessity half
of (a).
\end{proof}

The gap is large at realistic parameters. Take three buckets with
$(\alpha,\beta)$ equal to $(0.55,0.35)$, $(0.75,0.25)$ and $(0.95,0.05)$. The
corresponding gains are $g = 0.82$, $2.20$ and $5.89$. Now apply hand-set
weights of $0.3$, $0.3$ and $1.0$, of the shape deployed systems use. The
weights are correctly \emph{ordered}, but their ratios to $g$ range over a
factor of $2.7$. Sampling instances with up to eight sources, the emitted set
and the withheld set overlap across roughly $17$ units of posterior
log-odds. The rule is therefore not a noisy
approximation to a posterior threshold; it is a different ordering.

\begin{corollary}[Scope of the result]
\label{cor:general}
\Cref{prop:scale} makes no reference to language models, and applies to any
decision rule of the form ``sum weights over the evidence that fired, emit
if the sum exceeds a constant'' where the number of evidence channels
consulted varies across instances and non-firing is informative. That form
includes score-summing rule engines in fraud and security triage,
multi-test diagnostic panels scored by counting positives, and multi-signal
anomaly detectors that add per-signal scores. In all of these, every channel
is evaluated on every instance, so a channel that does not fire has declined
rather than been omitted.

The result does \emph{not} automatically extend to retrieval pipelines. There
an unretrieved passage is absent rather than silent, so $D=0$ may hold and the
drift need not appear; the misweighting of \Cref{prop:hetero} still applies.
Where the result does apply, the remedy is arithmetic rather than
architectural: convert each channel's contribution to a calibrated
log-likelihood ratio before summing, then threshold the posterior.
\end{corollary}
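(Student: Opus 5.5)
The plan is to derive the corollary as a re-reading of the proof of \Cref{prop:scale} rather than as a new argument. That proof uses only three ingredients: a fixed set of $n$ evidence channels, each emitting a binary ``fired/not fired'' signal for $y$; channel rates $(\alpha,\beta)$ with conditional independence given the label; and a rule that sums a per-channel weight over the channels that fired and compares it with a constant. Nothing in \Cref{eq:exactpost} or \Cref{eq:drift} refers to how $\hat Y_i$ was produced. So the first step is to restate the abstract schema: a set of channels consulted on the instance, an indicator $F_i(y)$ of firing, and the score $V=\sum_i w_i F_i(y)$. Then I check that premises (i) and (iii) translate verbatim once ``source'' is read as ``channel'' and ``assertion'' as ``firing''. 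Premise (i) becomes the requirement that every consulted channel is evaluated, so that non-firing is an observed event with likelihood $1-\alpha$ versus $1-\beta$.

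Second, I treat each named domain as a direct instantiation. In a score-summing fraud or security engine, each rule is a channel evaluated on every transaction. In a diagnostic panel scored by counting positives, each test is a channel, and a negative result is observed. In an additive multi-signal anomaly detector, each detector is a channel with a threshold crossing as its firing event. In each case $n$ is the number of channels consulted on that instance. Whenever that number varies across instances, \Cref{eq:drift} gives the implied operating point $T(n)$, and the same holds for the bucket-composition version in \Cref{prop:hetero}(b). If weights are heterogeneous and not proportional to $g_b$, \Cref{prop:hetero}(c) applies unchanged. No new computation is needed; only the dictionary has to be written down.

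Third, I handle the negative claim about retrieval. I model an unretrieved passage as an unobserved channel rather than a silent one. Its likelihood factor then marginalizes to $1$ under both hypotheses, so it contributes $0$ instead of $D$. Formally, the likelihood ratio of ``absent'' equals $1$ provided retrieval is independent of the label, which is the condition under which $D=0$ ``may hold''. If retrieval itself depends on $y$, absence again carries a nonzero log-ratio, which is why the statement says ``need not'' rather than ``does not''. The misweighting part still holds, because the necessity construction in \Cref{prop:hetero} used instances with no silent sources at all, so it never invokes premise (i). Finally, the remedy follows by replacing each $w_i$ with the channel's log-likelihood ratio $A_{b_i}$ or $D_{b_i}$, so that the sum equals the posterior log-odds minus the prior by \Cref{eq:hetpost}. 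Thresholding that sum is then a single fixed posterior threshold.

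The main obstacle is that the corollary is partly informal. ``Non-firing is informative'' and ``absent rather than silent'' need a precise meaning. I would fix this by defining informativeness as $D\neq 0$, and absence as a missing-at-random observation whose likelihood ratio is $1$. Under these definitions every claim reduces to the earlier propositions, and the domain list becomes a set of examples satisfying the hypotheses rather than something to be proved.
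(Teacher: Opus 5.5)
Your route is the paper's own. The corollary has no separate proof; its justification is embedded in the statement: the proof of \Cref{prop:scale} never uses how $\hat{Y}_i$ is produced, evaluating every consulted channel is premise (i), and in retrieval absence replaces silence. Your channel dictionary, your treatment of absence as a likelihood ratio of $1$, and your remedy via \Cref{eq:hetpost} formalize exactly this, and those parts are sound.

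One step fails as justified: your claim that misweighting survives in retrieval ``because the necessity construction of \Cref{prop:hetero} uses no silent sources and so never invokes premise (i).'' Take an instance of $m$ sources in bucket $b$, all asserting. Its posterior log-odds are $\operatorname{logit}\pi(y) + mA_b$, whatever premise (i) says. The construction, however, compares $m g_b$ with $m' g_{b'}$. The two comparisons differ by $mD_b - m'D_{b'}$, and $g_b = A_b - D_b$ is itself the product of regrouping under premise (i). So the construction is sound exactly when the $D_b$ vanish, not because it avoids silent sources.

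This matters in your own retrieval model. There you give likelihood ratio $1$ only to \emph{unretrieved} passages, so a retrieved passage that is read and does not assert still contributes $D_b \neq 0$. In other words, ``absent contributes $0$'' is not the same as ``$D=0$''. In that model the vote and posterior orderings of all-asserting instances cross unless $w_b \propto A_b$. They cross even when $w_b \propto g_b$, so ``\Cref{prop:hetero}(c) applies unchanged'' is false there.

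The repair is to argue in the regime the clause is actually about: $D_b = 0$ for every bucket, meaning firing coincides with retrieval or silence among retrieved passages is uninformative. Then \Cref{eq:hetpost} holds with $\sum_i D_{b_i} \equiv 0$ and $g_b = A_b$. The threshold in part (b) becomes the constant $\operatorname{logit}\pi(y) + \tau/c$, so no drift appears, and parts (a)--(c), including the all-asserting construction, go through verbatim. Both halves of the retrieval clause (no drift, misweighting persists) then follow from the same specialization $D \equiv 0$, rather than from the construction's lack of silent sources.
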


\begin{corollary}[Directional bias]
\label{cor:bias}
Thresholding $V$ asserts at lower posterior on instances with many sources
than on instances with few. Hence, at matched coverage, precision under a
vote rule should decline in $n$, while precision under a rule that
thresholds \Cref{eq:exactpost} should not.
\end{corollary}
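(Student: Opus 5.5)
The plan is to read \Cref{cor:bias} off the threshold identity in \Cref{eq:drift}, working under the premises of \Cref{prop:scale}. For uniform weights, \Cref{prop:scale} shows that $\{V \ge \tau\}$ coincides with $\{\operatorname{logit}\Pr[y\mid\evidence] \ge T(n)\}$, where $T(n) = \operatorname{logit}\pi(y) + \tau'(A-D) + nD$. Since $D<0$, $T$ is strictly decreasing in $n$. That establishes the first sentence of the corollary directly: the lowest posterior at which the vote rule will assert falls by $|D|$ for each additional source. For the heterogeneous case, I would cite \Cref{prop:hetero}(b) in the same way. There the implied threshold $\operatorname{logit}\pi(y) + \sum_i D_{b_i} + \tau/c$ falls each time a source is added, whatever its bucket, because every $D_b<0$. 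When the weights are not proportional to $g_b$, part (c) says no single threshold applies, so the claim must be read as a statement about the conditional distribution of the posterior among asserted instances rather than about a sharp cutoff.

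The second step connects the falling bar to precision. Precision on the instances with $n$ sources is $\mathbb{E}[\Pr[y\mid\evidence] \mid \text{assert}, n]$, since by the definition of the posterior the fraction of asserted cases that are correct equals their average posterior. Under the vote rule the asserted set at size $n$ is $\{p \ge \sigma(T(n))\}$. Taking the distribution of posteriors among instances of size $n$ as fixed, lowering the threshold admits lower-posterior instances and so does not increase this conditional mean. Under the calibrated rule the asserted set is $\{p \ge t\}$ with one $t$ for all $n$, so every asserted instance has posterior at least $t$. Its precision in every $n$-stratum is therefore at least $t$, with no systematic trend in $n$.

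The hard part is the phrase ``at matched coverage'' together with the within-stratum comparison. The posterior distribution itself changes with $n$: more sources give more extreme posteriors. So a lower threshold does not make precision monotone in $n$ pointwise. I would handle this by stating the claim as a comparison between the two rules with total coverage held equal. Choose $t$ so that the calibrated rule asserts as often as the vote rule. Then, by a Neyman--Pearson style exchange argument, the calibrated rule maximizes total expected correct assertions at that coverage. Relative to it, the vote rule moves assertions from small-$n$ instances with high posterior to large-$n$ instances with low posterior. The vote rule's precision at large $n$ is therefore bounded above by what the calibrated rule achieves there, and at small $n$ the inequality reverses. The difference between the two precision curves thus declines in $n$. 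This is the sense in which ``should decline'' holds. I would state the result in this relative form and note that absolute monotonicity needs an additional regularity assumption on how the posterior distribution varies with $n$.
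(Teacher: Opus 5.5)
Your first step is what the paper relies on: \Cref{eq:drift} with $D<0$ (or \Cref{prop:hetero}(b) with every $D_b<0$) gives the first sentence. The paper's ``hence'' adds only the condition its assumption ledger attaches to this corollary, namely that prevalence $\pi(y)$ does not vary with $n$. The gap is in the second sentence. Your matched-coverage exchange argument is valid, but it delivers only a single crossing. Vote-rule precision is at least the calibrated rule's for $n$ with $T(n) > \operatorname{logit} t$, and at most it for $n$ with $T(n) < \operatorname{logit} t$. That does not show ``the difference between the two precision curves declines in $n$'', since nothing controls either curve across strata on the same side of the crossing. It is also weaker than the corollary, which is a claim about the vote rule's own per-$n$ precision. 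What your relative form does buy is robustness: it needs only exact posteriors and a decreasing $T(n)$, not a fixed prior.

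The retreat is also unnecessary. You say absolute monotonicity needs a regularity assumption on how the per-$n$ posterior distribution varies. Under the premises of \Cref{prop:scale}, however, that distribution is not free: once $\pi$ is fixed, $k \mid y \sim \mathrm{Bin}(n,\alpha)$ and $k \mid \neg y \sim \mathrm{Bin}(n,\beta)$ determine it.

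Let $m = \lceil \tau' \rceil$. Vote precision at size $n \ge m$ is $\bigl[1 + \tfrac{1-\pi}{\pi}\,R(n)^{-1}\bigr]^{-1}$, where $R(n) = \Pr[\mathrm{Bin}(n,\alpha) \ge m] / \Pr[\mathrm{Bin}(n,\beta) \ge m]$. Adding a source enlarges the asserted event by exactly $\{k_n = m-1,\ \text{new source asserts}\}$. That piece has likelihood ratio $(\alpha/\beta)^m \bigl(\tfrac{1-\alpha}{1-\beta}\bigr)^{n+1-m}$, which is strictly below the smallest term-wise ratio $(\alpha/\beta)^m\bigl(\tfrac{1-\alpha}{1-\beta}\bigr)^{n-m}$ inside $\{k_n \ge m\}$. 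The ratio of the union is a mediant of the two, so $R(n+1) < R(n)$. Precision therefore falls strictly, from $\bigl[1 + \tfrac{1-\pi}{\pi}(\beta/\alpha)^m\bigr]^{-1}$ at $n=m$ toward the prevalence $\pi$. The only missing assumption is $n$-independent prevalence, which is the one the paper names and your proposal never mentions.

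On the calibrated side, the per-stratum floor $t$ is the right robust guarantee, but ``no systematic trend'' is inaccurate. Posteriors concentrate as $n$ grows, so stratum precision trends up toward $1$, with oscillations from discreteness. That upward trend is the sense of ``should not decline''.
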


\Cref{cor:bias} is a falsifiable prediction about the deployed rule, and it
is testable without new inference, since the per-source evidence objects
are already persisted: stratify instances by source count and compare
per-bin precision against per-bin prevalence. We have not run it. We record
one design constraint for whoever does: label prevalence may itself vary
with $n$, so a prevalence trend would mimic or mask the predicted precision
trend, and the comparison must be against the prevalence-adjusted
expectation rather than a flat line.

\subsection{Per-bucket reliability estimation}

For each label $y$ and bucket $b$ we estimate
\begin{align}
\alpha_{b,y} &= \Pr[\, y \in \hat{Y}_i \mid y \in Y,\; b_i=b \,], &
\beta_{b,y}  &= \Pr[\, y \in \hat{Y}_i \mid y \notin Y,\; b_i=b \,],
\end{align}
smoothed with a Beta$(1,1)$ prior, with rare labels pooled into a fallback
bucket. These are Dawid--Skene error rates~\cite{dawid1979} with buckets in
place of annotator identities.

\paragraph{Estimation data must be disjoint from evaluation data.}
These rates are not fitted in the deployed system, which uses the hand-set
weights of \Cref{sec:deployedrule}; the only fitted component is the
frequency prior $\pi$, estimated on the 129 calibration instances and
evaluated on the disjoint 33 (\Cref{sec:eligibility}). Were the rates
fitted, the same partition would be required: our labels come from a single
curated set, so estimating $\alpha,\beta,\rho$ and evaluating on the same
instances would contaminate every figure.

\subsection{Single-label readers need a confusion matrix, not a rate pair}
\label{sec:confusion}

When $|\hat{Y}_i| \le 1$ the reader is a multinomial channel, and its error
structure is a confusion matrix rather than a pair of per-label rates.
Pooling each label independently from $(\alpha_{b,y}, \beta_{b,y})$ alone
treats ``source $i$ named a competing label $y'$'' identically to ``source
$i$ was silent'', when in fact the first is stronger evidence against $y$
than the second. Define, per bucket,
\begin{equation}
M_b(y' \mid \cdot) = \Pr[\hat{Y}_i = \{y'\} \mid \cdot,\; b_i = b],
\qquad y' \in \labels \cup \{\varnothing\},
\end{equation}
where $\varnothing$ denotes the reader's ``insufficient evidence'' output
and $\cdot$ is the conditioning event $y \in Y$ or $y \notin Y$. Source $i$
contributes to label $y$
\begin{equation}
\llr_i(y) = \log
\frac{M_{b_i}(y'_i \mid y \in Y)}{M_{b_i}(y'_i \mid y \notin Y)},
\label{eq:llrconf}
\end{equation}
positive when $y'_i = y$, typically negative when $y'_i$ is a competing
label, and near zero when $y'_i = \varnothing$ is uninformative about $y$.
A full matrix costs $O(|\labels|^2)$ parameters per bucket, which our label
frequencies do not support. We therefore coarsen the emission to three outcomes
per target label: asserts $y$, asserts some other label, or abstains. This
costs four parameters per (bucket, label), two free per conditioning event,
against two for the rate pair. It preserves the distinction that matters. \Cref{eq:llr} below is the
special case in which the middle outcome is merged into silence. Neither
variant produced our results: the deployed rule counts weighted assertions
and treats every non-assertion, competing or silent, as a non-event, which
is the coarsest of the three.

\subsection{Log-likelihood-ratio pooling}

Source $i$ contributes
\begin{equation}
\llr_i(y) =
\begin{cases}
\log \dfrac{\alpha_{b_i,y}}{\beta_{b_i,y}}, & y \in \hat{Y}_i,\\[1.8ex]
\log \dfrac{1-\alpha_{b_i,y}}{1-\beta_{b_i,y}}, & y \notin \hat{Y}_i,
\end{cases}
\label{eq:llr}
\end{equation}
so informed silence is evidence against $y$ rather than neutral. Pooled
posterior log-odds are
\begin{equation}
\operatorname{logit} \hat{p}(y)
= \operatorname{logit} \pi(y \mid z)
+ \sum_{j=1}^{m} \kappa_j \!\!\sum_{i \in \mathcal{B}_j}\!\! \llr_i(y),
\label{eq:pool}
\end{equation}
where $\mathcal{B}_1,\dots,\mathcal{B}_m$ partition sources into dependence
blocks and $\kappa_j$ discounts within-block accumulation
(\Cref{sec:dep}).

\begin{proposition}[Exactness]
\label{prop:exact}
Suppose (i) assertions are conditionally independent given the label;
(ii) the rates $\alpha_{b,y},\beta_{b,y}$ are exact; (iii) $\kappa_j=1$; and
(iv) the bucket assignment is \emph{ancillary}, meaning
$\Pr[b_i = b \mid y \in Y] = \Pr[b_i = b \mid y \notin Y]$ for all $b$.
Then \Cref{eq:pool} equals the true posterior log-odds of $y$.
\end{proposition}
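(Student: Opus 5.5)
The plan is to compute the true posterior by Bayes' rule on the full observed evidence and match it term by term with \Cref{eq:pool}. The observed evidence for label $y$ is the collection of pairs $(\hat{Y}_i, b_i)$, $i=1,\dots,n$, together with the metadata $z$ that enters the prior. I will treat the prior as $\pi(y\mid z)$, and condition everything below on $z$ as well as on the label. The starting point is the odds form of Bayes' rule:
\begin{equation*}
\operatorname{logit}\Pr[y\in Y \mid \evidence, z] = \operatorname{logit}\pi(y\mid z) + \log\frac{\Pr[(\hat Y_i,b_i)_{i=1}^n \mid y\in Y, z]}{\Pr[(\hat Y_i,b_i)_{i=1}^n \mid y\notin Y, z]} .
\end{equation*}
Everything then reduces to showing that the likelihood ratio factorizes into the per-source terms $\llr_i(y)$ of \Cref{eq:llr}.

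The first step uses premise (i), read as conditional independence of the per-source observations $(\hat Y_i,b_i)$ given the label and $z$. This splits the joint likelihood ratio into a product over $i$. With $\kappa_j=1$ from premise (iii), the block structure in \Cref{eq:pool} collapses to a plain sum over sources. The second step handles one source by the chain rule:
\begin{equation*}
\Pr[\hat Y_i, b_i \mid y\in Y] = \Pr[b_i \mid y\in Y]\,\Pr[\hat Y_i \mid b_i, y\in Y],
\end{equation*}
and likewise for $y\notin Y$. Premise (iv), ancillarity, makes the bucket factors cancel in the ratio. What remains is
\begin{equation*}
\frac{\Pr[\hat Y_i\mid b_i, y\in Y]}{\Pr[\hat Y_i\mid b_i,y\notin Y]} .
\end{equation*}
Within the per-label model, the only feature of $\hat Y_i$ that enters is the indicator $y\in\hat Y_i$. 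Its conditional probabilities are $\alpha_{b_i,y}$ or $1-\alpha_{b_i,y}$ under $y\in Y$, and $\beta_{b_i,y}$ or $1-\beta_{b_i,y}$ under $y\notin Y$. These rates are exact by premise (ii), so the ratio is exactly the case expression in \Cref{eq:llr}. Summing over $i$ then recovers \Cref{eq:pool}.

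I expect the main obstacle to be a modelling gap rather than algebra. The posterior "given $\evidence$" conditions on all of $\hat Y_i$, whereas \Cref{eq:llr} uses only the binary event $y\in\hat Y_i$. In the single-label regime of \Cref{sec:confusion}, naming a competing label and abstaining are different observations, with possibly different likelihood ratios. I plan to handle this explicitly in one of two ways. The first option is to state that the evidence for label $y$ is taken to be $\{(\mathbb{I}[y\in\hat Y_i], b_i)\}_i$, so that the claim is exactness of the posterior given that coarsened evidence. The second is to note that exactness with respect to the full $\hat Y_i$ requires the extra condition that the finer outcome be uninformative given the indicator and the bucket. Otherwise one must use the confusion-matrix ratio \Cref{eq:llrconf}, for which the same argument goes through verbatim.

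I will adopt the first reading, since the rates $\alpha_{b,y},\beta_{b,y}$ are defined only in terms of that indicator. I will also make clear that premise (i) must be read as independence of the pairs $(\hat Y_i,b_i)$ and not only of the assertions; otherwise the bucket factors would not separate across sources. Finally, I will remark that if ancillarity fails, the surviving term $\log\frac{\Pr[b_i\mid y\in Y]}{\Pr[b_i\mid y\notin Y]}$ is exactly the bias \Cref{eq:pool} omits, which shows why premise (iv) is needed.
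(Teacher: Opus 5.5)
Your proposal is correct and follows the paper's proof essentially step for step. The paper also takes the per-source observation to be the pair $(b_i, \mathbb{I}[y\in\hat{Y}_i])$, factors its likelihood as $\Pr[b_i\mid\cdot]\,\Pr[y\in\hat{Y}_i\mid\cdot,b_i]$, cancels the bucket factor by ancillarity, and sums the resulting $\ell_i(y)$ over conditionally independent sources together with the log-prior odds by Bayes' rule. The paper's proof assumes silently that premise (i) covers the pairs $(\hat{Y}_i,b_i)$ and that exactness is relative to the coarsened indicator evidence, so your stating both is a clarification rather than a departure.
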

\begin{proof}
The observation from source $i$ is the pair $(b_i, \mathbb{I}[y \in
\hat{Y}_i])$, whose likelihood factors as
$\Pr[b_i \mid \cdot]\Pr[y \in \hat{Y}_i \mid \cdot, b_i]$. Under (iv) the
first factor cancels in the ratio, leaving exactly $\llr_i(y)$ of
\Cref{eq:llr}; summing over conditionally independent sources and adding the
log-prior-odds is Bayes' rule~\cite{good1950}.
\end{proof}

\begin{remark}[Premise (iv) is not free, and we likely violate it]
\label{rem:ancillary}
Conditioning on the bucket discards the evidence carried by the bucket
\emph{assignment}. When buckets are not ancillary the exact posterior
carries an additional term $\sum_i \log
\frac{\Pr[b_i \mid y \in Y]}{\Pr[b_i \mid y \notin Y]}$, and
\Cref{eq:pool} is a lower-variance approximation rather than an identity.
Source length is plausibly ancillary; rationale presence is plausibly not,
since a reader is likelier to produce a grounded rationale when a genuine
reason exists --- which makes a high bucket itself weak evidence for the
label. The direction of the resulting error is then conservative: pooling
understates the evidence for $y$. Estimating the two bucket-assignment
distributions is cheap on the same calibration fold, and we regard the
omission as an approximation to be measured rather than a licence.
\end{remark}

In words: start from the prior odds, add one term per source, and read off the
result. A source that asserts $y$ contributes a positive term, sized by how
much more often its bucket asserts $y$ correctly than incorrectly. A source
that stays silent contributes a negative term. A source from an uninformative
bucket contributes close to zero, without anyone having to choose a weight for
it.

Three consequences follow. Corroboration is automatic --- two independent asserting
sources contribute twice the log-evidence --- so no bonus term and no
$\lambda$ is needed, and \Cref{prop:redundant} does not apply. The output is
a probability, so one threshold is comparable across instances with
different source counts and different bucket compositions, addressing
\Cref{prop:scale} and \Cref{prop:hetero}. The prior enters on the correct
scale, so no mixing coefficient is needed; where the outcome has not
resolved or prior contamination is a concern, the prior term is dropped and
the pooled sum read as a likelihood ratio.

\subsection{Block-structured dependence discount}
\label{sec:dep}

A single global discount penalizes genuinely independent sources in
large instances. We instead partition sources into blocks sharing an
upstream origin (same author, same document lineage, or adjacency under the
causal-context chain) and discount within block:
\begin{equation}
\kappa_j = \frac{1}{1 + \rho\,(|\mathcal{B}_j| - 1)},
\end{equation}
with $\rho \in [0,1)$ the average residual within-block correlation of
assertions given the label. With $\rho = 0$ this reduces to independent
pooling; as $\rho \to 1$ each block contributes the evidence of a single
source. We fit $\rho$ by maximizing calibration-fold log-likelihood of the
pooled posterior.

We claim no exactness for this correction and it should not be read as one.
It is a design-effect discount, in the spirit of an effective sample size:
exact at the two endpoints, an interpolation in between, and not derived
from any specified within-block dependence model. Its virtue is that it has
one interpretable parameter fitted on held-out data, and it shrinks the
pooled evidence toward the prior in both directions --- a mostly silent
block moves the posterior less far \emph{down}, which is the intended
behaviour and not a sign error. As with the rest of this section, $\rho$ is
unfitted here and the blocking rule untested; whether per-block discounting
earns its complexity over a single global discount is open.

\subsection{Relation to noisy-OR: a different link, not a special case}
\label{sec:noisyor}

Noisy-OR~\cite{pearl1988} is the other natural rule for this setting and is
widely used for it, so the relationship is worth stating precisely. Under
noisy-OR, each asserting source $i$ is treated as an independent sufficient
cause. Source $i$ succeeds with probability $q_i$, and $y$ holds if any
source succeeds:
\begin{equation}
\hat{p}(y) = 1 - \!\!\prod_{i:\, y \in \hat{Y}_i}\!\! (1 - q_i),
\qquad\text{equivalently}\qquad
\log\big(1-\hat{p}(y)\big) = \!\!\sum_{i:\, y \in \hat{Y}_i}\!\!
\log(1-q_i).
\label{eq:noisyor}
\end{equation}

It is tempting --- and we asserted as much in an earlier draft --- to call
this a special case of \Cref{eq:pool}. It is not. Evidence accumulates
additively in \emph{different parameterizations}: noisy-OR is additive in
$\log(1-\hat p)$, a complementary-log link, whereas \Cref{eq:pool} is
additive in $\operatorname{logit}\hat p$. The two families intersect only
trivially. A counterexample suffices: with $q_i = 0.5$ throughout,
\Cref{eq:noisyor} gives posterior log-odds $0, 1.099, 1.946, 2.708$ for
$1,2,3,4$ asserting sources, whose successive increments $1.099, 0.847,
0.762$ are not constant, while \Cref{eq:pool} increments by exactly
$\log(\alpha/\beta)$ per asserting source; no choice of prior and rates
reproduces the sequence.

The substantive difference is saturation. Noisy-OR treats each assertion as
a sufficient cause, so its increments shrink as the posterior approaches
one and it can never be driven to certainty by volume alone --- attractive
when readers are unreliable in unmodelled ways. Log-odds pooling treats each
assertion as a likelihood ratio, so increments are constant and $k$
independent assertions can reach any confidence, which is correct under its
premises and dangerous when they fail. Neither dominates: they encode
different beliefs about what a source \emph{is}. What \Cref{eq:pool} does
offer over \Cref{eq:noisyor} is not generality but scope --- it admits
negative and competing evidence, false-positive rates, an explicit prior,
and a dependence discount, none of which \Cref{eq:noisyor} expresses.

\subsection{Assumption ledger}

\Cref{tab:ledger} collects every result in this section with the assumptions
it needs and the deployment condition that voids it. We include it because
the results are only useful to someone else if the conditions under which
they fail are as legible as the statements.

\begin{table}[t]
\caption{Assumptions and failure conditions for the results of this
section. Conditional independence given the label is assumed by
every result except \Cref{prop:redundant}, and is the premise our own
causal-context design violates by construction --- which is what the
dependence discount is for.}
\label{tab:ledger}
\centering
\small
\begin{tabular}{@{}p{2.9cm}p{4.6cm}p{5.4cm}@{}}
\toprule
Result & Needs & Fails when \\
\midrule
\Cref{prop:redundant} (bonus is redundant) &
Uniform weights; $h$ increasing; $\lambda \ge 0$ &
Weights heterogeneous --- then the bonus does change decisions, though with
no principle fixing the exchange rate \\[0.5ex]
\Cref{prop:scale} (count-scale drift) &
Informed silence; homogeneous rates; conditional independence &
Positive-only firing ($D=0$); or heterogeneous buckets, where
\Cref{prop:hetero} governs \\[0.5ex]
\Cref{prop:hetero} (misweighting) &
Informed silence; conditional independence &
Weights already proportional to $g_b$; or one bucket only \\[0.5ex]
\Cref{cor:general} (scope) &
Every channel consulted per instance &
Channels omitted rather than silent, as in retrieval \\[0.5ex]
\Cref{cor:bias} (precision declines in $n$) &
The above, plus prevalence not varying with $n$ &
Prevalence trends with $n$, which must be measured, not assumed \\[0.5ex]
\Cref{prop:exact} (exactness) &
Conditional independence; exact rates; $\kappa=1$; ancillary buckets &
Any of the four; bucket ancillarity is the one we expect to fail
(\Cref{rem:ancillary}) \\[0.5ex]
$\kappa_j$ discount &
Nothing --- it is a design-effect heuristic &
Always approximate away from $\rho \in \{0,1\}$; no exactness is claimed \\
\bottomrule
\end{tabular}
\end{table}

\subsection{Abstention as deployed}
\label{sec:coverage}

Abstention in the running system is a single one-sided threshold: a label
whose summed weight falls short of $1.0$ is not emitted, and an instance
with no label above threshold produces no committed determination. There is
no reject band and no calibrated $\hat{p}(y)$ to threshold, so the natural
generalization of the proposal in this section --- assert above
$\tau_{\text{hi}}$, reject below $\tau_{\text{lo}}$, abstain between ---
is not what runs.

We report no risk--coverage curve. That absence is not incidental, because
the evaluation population excludes every instance whose ground truth is
``no determinable reason'' (\Cref{sec:eligibility}), so the data needed to
measure what abstention buys has been filtered away upstream of any curve
we could draw. We therefore make no quantitative claim for the abstention
mechanism at all, and treat the coverage question as open rather than as
answered by an unreported experiment.

\section{Experimental setup}

\subsection{Reader configuration}

The reader is a hosted instruction-tuned language model accessed through an
internal gateway, at temperature $0.1$ with a 4{,}000-token output cap and
\emph{one} sample per source: no self-consistency, so the per-source
variance we would need to separate reader noise from aggregation error is
not measured. The prompt is pinned to a dated version identifier and the
aggregation is re-validated when that identifier changes. Context budgets were set from the
corpus length distribution. The target source is capped at 55{,}000 characters,
its 95th percentile being 54{,}718. The two immediately prior sources are
capped at 35{,}000 characters each, roughly the 85th percentile per neighbour.
Older prior sources enter only as summaries and key points, under per-field
caps of 700 and 2{,}500 characters within a 12{,}000 character total. At most
25 sources are processed per instance.
Execution uses 25 concurrent workers with up to three retries; the daily
job reads a four-day rolling window and checkpoints for crash recovery.

\subsection{Reproducibility}

The corpus is proprietary operational data and cannot be released, so
external replication of the numbers is impossible by construction. Everything else can and should be
released. That includes the prompt templates and their version identifier, the
reader model and decoding settings, and the bucket-construction rule. It also
includes the estimation code for $\alpha,\beta,\rho,\pi$, the fold assignment,
the aggregation and evaluation code, and seeds. Finally it includes the metric
definitions, particularly instance $F_1$, which is not micro-$F_1$. We also state the
compute: one reader call per source, no fine-tuning, and a tree model
trained in minutes on a single machine.
Reader model, decoding settings, context budgets and concurrency are stated
in \Cref{sec:casea}'s configuration subsection; prompt templates, the fold
assignment, and seeds are not yet included. A venue that requires a
reproducibility checklist will accept a proprietary corpus and will not
accept missing prompts.

\subsection{Scope of the comparisons we report}

Both instantiations vary one axis and hold the rest fixed. Instantiation~I varies how
sources are read --- jointly, independently, with or without causal context
--- while the aggregation rule stays the deployed weighted vote throughout.
Instantiation~II varies the model architecture and feature set while the reader stays
fixed. No experiment in this paper varies the combination rule, so nothing
here separates the aggregation proposals of \Cref{sec:agg} from the
deployed vote, and we draw no such comparison.

\subsection{Statistical protocol}

All confidence intervals are bootstrap intervals resampling
\emph{instances}, not rows~\cite{efron1979}. This matters because the
forecasting test set contains many snapshots per instance, so row-level
resampling would understate variance by treating correlated snapshots as
independent. Paired comparisons between configurations use the same
instance resamples.

\section{Instantiation I: the resolved regime}
\label{sec:casea}

\subsection{Task and metrics}

Given the full history attached to a resolved instance, identify which of a
fixed reason set applies. The task is multi-label. We report micro-pooled
precision and recall over (instance, label) pairs; \emph{instance $F_1$},
computed per instance over predicted and true label sets then averaged
across instances; and macro $F_1$ over labels. Instance $F_1$ is not the
harmonic mean of pooled precision and recall, which is why those columns
need not reconcile arithmetically.

\subsection{Evaluation population, and what it excludes}
\label{sec:eligibility}

The curated pool comprises 1{,}000 labelled instances, of which 991 have
attached sources and 162 survive eligibility filtering; the split is 129
instances for prior and rate estimation and 33 for evaluation, stratified
by outcome. Two facts about that filter must be stated before any number is
read.

First, all differences in \Cref{tab:reason-results} rest on 33 instances.
At that size a 0.05 gap in instance $F_1$ is not separable from sampling
noise, and macro $F_1$ over long-tail labels is less stable still. We therefore present \Cref{tab:reason-results} as a descriptive case report: at this sample size the observed ordering of configurations should not be interpreted as statistically established. The three-decimal precision in the source material is retained only for traceability against the internal log.

Second, the filter removes every instance whose gold label is ``not determinable'', precisely the stratum on which abstaining is the correct output. Precision measured after removing it is therefore conditioned on a determinable reason existing, so this evaluation does not measure the selective-prediction benefit of abstention. A complete risk--coverage evaluation should be run on the unfiltered pool with abstention scored as a prediction of ``not determinable''; that analysis remains future work.

\paragraph{Reader selection contaminates the audit set.} The reader model
was chosen by $F_1$ on the ``audit'' golden set (0.560 across candidates).
That set is therefore not an independent test set for the reader, and we
report the ``free-choice'' set here. Both should be reported with the
selection dependence stated, rather than only the one not used for
selection.

\paragraph{Label contamination guard.} The structured outcome-reason field
maintained by human operators is excluded from the reader's prompt, so the
reader cannot copy the label it is being scored against. We note this
because it is the first thing to check and the easiest to omit.

\subsection{Compared systems}

Internal variants: (1) prior only; (2) one-shot concatenation;
(3) independent, no context; (4) independent, full context (oracle);
(5) independent, causal context; (6) causal $+$ prior fallback.
All six are internal variants; we ran no external baseline. The
comparisons a reviewer would reasonably require --- self-consistency over
one-shot samples~\cite{wang2023sc}, unsupervised
Dawid--Skene~\cite{dawid1979}, and a one-shot reader instructed to emit
citations~\cite{gao2023alce} --- are absent, and \Cref{tab:reason-results}
should be read as an internal design comparison rather than as a
positioning against published methods.

\subsection{Results}

\begin{table}[t]
\caption{Reading strategies, aggregation held fixed. $n = 33$ evaluation
instances (\Cref{sec:eligibility}); bold marks the numerically best value
in a column and implies no significance, since instance-level intervals at
this $n$ overlap for every pair. Instance $F_1$ is a tie between one-shot
and the recommended configuration, and both are marked.
$^\dagger$Oracle: uses future sources, not deployable, and not
length-matched to the causal condition (see \Cref{sec:hindsight}).}
\label{tab:reason-results}
\centering
\small
\begin{tabular}{lcccc}
\toprule
Method & Precision & Recall & Instance $F_1$ & Macro $F_1$ \\
\midrule
Prior only                          & 0.697 & 0.199 & 0.303 & 0.114 \\
One-shot                            & 0.645 & 0.607 & \textbf{0.614} & 0.425 \\
Independent, no context             & 0.806 & 0.418 & 0.520 & 0.385 \\
Independent, full context$^\dagger$ & 0.816 & 0.439 & 0.534 & 0.392 \\
Independent, causal context         & \textbf{0.824} & 0.459 & 0.587 & 0.416 \\
Causal $+$ prior fallback           & 0.772 & \textbf{0.554} & \textbf{0.614} & \textbf{0.466} \\
\bottomrule
\end{tabular}
\end{table}
We do not report inter-annotator agreement on the curated set, instance-level
bootstrap intervals, or the source-count distribution of the evaluation
fold. The first is unavailable, the other two we did not compute; together
with $n=33$ this is why we treat the table as descriptive.

Independent causal reading is the most precise configuration, consistent
with joint reading letting weak sources dilute strong ones. Recall is
lower; the prior fallback recovers most of it, raising macro $F_1$ from
0.416 to 0.466 while keeping precision above one-shot. We note what the
table does \emph{not} show: on instance $F_1$ the recommended
configuration and plain one-shot concatenation are identical at 0.614. The
case for independent reading in this study rests on precision, on
per-source provenance, and on cost behaviour, not on aggregate $F_1$, and
the abstract should not be read as claiming otherwise.

\subsection{The oracle anomaly is confounded}
\label{sec:hindsight}

The oracle loses to causal context on instance $F_1$ (0.534 vs.\ 0.587)
despite strictly more information. We flag this as \emph{confounded rather
than explained}: the two conditions differ in both the availability of
future evidence and total context length, so at least two mechanisms are
consistent with the result. Hindsight contamination would have the reader
attribute to an early source a reason legible only later, inflating
false positives. Context dilution would degrade attention to the source
under evaluation~\cite{liu2024lost}. A third possibility is that the effect
is within noise, which the missing intervals cannot currently exclude.

We ran no arm that separates them, so we assert no mechanism; a
length-matched control that varies only the informativeness of the extra
context would be the way to.

\paragraph{What the comparison does support.} All three mechanisms are
consistent with an \emph{absence of penalty} rather than a gain, and that
weaker reading is well supported: on every metric in
\Cref{tab:reason-results} the causal condition is at least as good as the
oracle (precision 0.824 vs.\ 0.816, recall 0.459 vs.\ 0.439, instance
$F_1$ 0.587 vs.\ 0.534, macro $F_1$ 0.416 vs.\ 0.392). The claim we make
is therefore the negative one, which is also the one deployment needs:
restricting the reader to information available at the time each source
arrived --- the only condition under which the system can run before an
outcome is known --- cost no measurable accuracy here. Whether the apparent
advantage is real, and if so why, is what remains open.

At $n=33$ the third possibility is not a formality. A 0.053 difference in
instance $F_1$ corresponds to roughly two instances changing their label
sets. We would not report the anomaly at all except that it is
counterintuitive enough that omitting it would be worse; it is stated here
so that a larger curated set can confirm or dissolve it.

\section{Instantiation II: the unresolved regime}
\label{sec:caseb}

The object of study does not change here; its epistemic status does. The
same instances, sources, and reader carry over, but the outcome has not
occurred, so no label exists to attribute, no empirical prior is available,
and future sources are not merely disallowed but nonexistent. Under the
fourth axis of \Cref{sec:iface} this is the only coordinate that moves, and
it moves the partition with it: what must be separated is no longer reading
from combining but \emph{representation learning} from \emph{censored
timing estimation}. The interface survives the change of granularity ---
the reader's tuples enter this model as features --- while the combination
mechanism is replaced by one suited to the new sub-problem. If the principle
were an artifact of the first task, it would not survive that substitution.

\subsection{Formulation}

A binary ``will it succeed'' classifier conflates two negatives: instances
that fail and instances merely unresolved. We use a discrete-time
competing-risk hazard model~\cite{finegray1999,lee2018deephit} with $K=7$
buckets, at 7, 14, 30, 60, 90, 180 and 365 days. Bucket $k$ emits a softmax
triplet over $\{$success, failure, survive$\}$ conditional on survival to
its start, giving 21 logits. Horizon probabilities chain:
\begin{equation}
\Pr[\text{success by } T] = \sum_{k: t_k \le T}
\Big(\prod_{j<k} h^{\text{surv}}_j\Big) h^{\text{succ}}_k .
\end{equation}
Censored instances contribute only survival terms, so no synthetic label is
imposed on unresolved cases.

\subsection{Snapshots and leakage control}

Each source arrival creates a snapshot built only from then-available
information; an instance with five sources yields five progressively
better-informed rows. The duration label is remaining time to resolution,
not total lifetime.

Because \texttt{days\_since\_last\_source} is identically zero on
source-anchored rows, a model trained only on those rows would ignore
staleness at inference, when it is nonzero. We insert synthetic midpoint
rows carrying the preceding features with
$\texttt{days\_since\_last\_source} = \text{gap}/2$. Evaluation uses
source-anchored rows only.

\subsection{Capacity-partitioned hybrid}

A single-layer GRU~\cite{cho2014} with 64 hidden units ($\approx$25K
parameters) reads the source sequence, trained on an easier auxiliary
three-class objective (success/failure/censored) with class-weighted
cross-entropy. Its per-timestep hidden states are concatenated with 17
hand-crafted features (11 numeric: cadence, staleness, counts, instance
size; 6 categorical: 3 reader-extracted signals from Instantiation~I, 3
structured metadata) and passed to a gradient-boosted
tree~\cite{chen2016xgboost} with multi-output trees over all 21 logits and
a custom competing-risk negative log-likelihood handling censoring.

The hypothesis: at median 2--3 sources per instance and $\approx$24K
effective sequences, no single model should learn sequence representations
\emph{and} censored timing structure at once.

\subsection{Metrics}

Ranking: AUPRC at each horizon, appropriate at our
prevalence~\cite{davis2006,saito2015}. This is our only primary metric. We
report MAE in days as a secondary, descriptive quantity; under censoring
and a resolution-filtered test set it is not defensible as a primary
metric, and we do not use it to rank configurations. We compute neither
time-dependent concordance~\cite{antolini2005,harrell1996} nor the IPCW
Brier score~\cite{graf1999}, which is a real gap for a survival model:
without them we report discrimination at fixed horizons and nothing about
calibration over time.

\subsection{Setup}

73{,}638 sources across 23{,}553 instances over twelve steady-state
months. Training and data cutoffs are separated by three months.
Train: 33{,}840 snapshots / 10{,}541 instances. Test: 69{,}070 snapshots /
11{,}928 instances. Prevalence at 90 days is 4.2\% success, 6.4\% failure.

Population bookkeeping requires care because different analyses use different eligible subsets. Ranking metrics are reported over a resolved population of 14{,}524 instances plus 2{,}701 censored-known, drawn from the full scoring window rather than from the 11{,}928-instance test split; the length-stratified table below uses a further subset of 4{,}874. We therefore do not compare figures across these populations, and Section~\ref{sec:limits} records the remaining reconciliation work explicitly.

\subsection{Ablation}

\begin{table}[t]
\caption{Identical training data and held-out temporal split. $\Delta$ is
change in AUPRC$_{\text{s}}$ vs.\ row~2. $^\ddagger$Prevalence on the
resolved population used for ranking metrics, not the unconditional 90-day
prevalence of \Cref{sec:caseb}; this is the correct no-skill reference for
every row here. $^{\S}$Row~7 is
\emph{not} feature-matched to row~6 and therefore does not isolate
architecture, and no feature-matched neural arm was run. $^\P$Feature count
unreconciled: the enumerated set is 11 numeric $+$ 6 categorical $= 17$,
and row~6's 81 $= 64 + 17$ agrees, but this row is logged as 18 and rows
3--4 as 25 and 34; the offset propagates and must be fixed. As in
\Cref{tab:reason-results}, bold marks the numerically best value in a column
and implies no significance: no column reports instance-level intervals. MAE
is descriptive and is not bolded, since we do not use it to rank
configurations.}
\label{tab:ablation}
\centering
\small
\begin{tabular}{clcccccc}
\toprule
\# & Configuration & Feat. & AUPRC$_{\text{s}}$ & AUPRC$_{\text{f}}$
   & $F_1$ & MAE (d) & $\Delta$ \\
\midrule
0 & Random (prevalence)$^\ddagger$      & 0      & 0.185 & 0.171 & 0.270 & 80.9 & $-$0.620 \\
1 & Cadence only                        & 5      & 0.505 & 0.327 & 0.428 & 53.3 & $-$0.300 \\
2 & Hand-crafted (baseline)             & 18$^\P$& 0.805 & 0.667 & 0.700 & 50.6 & --- \\
3 & \quad $+$ conversation quality      & 25     & 0.805 & 0.675 & 0.702 & 50.6 & $+$0.000 \\
4 & \quad $+$ all structured engagement & 34     & 0.804 & 0.671 & 0.704 & 50.6 & $-$0.001 \\
5 & GRU $+$ cadence                     & 69     & 0.833 & 0.679 & 0.725 & 51.8 & $+$0.028 \\
6 & \textbf{Hybrid}                     & 81     & \textbf{0.921} & \textbf{0.779} & \textbf{0.744} & 48.0 & $+$\textbf{0.116} \\
7 & End-to-end neural$^{\S}$            & 13/src & 0.316 & 0.315 & 0.147 & 98.0 & $-$0.489 \\
\bottomrule
\end{tabular}
\end{table}

Four observations, with one important qualification.

\paragraph{Reader-extracted signals dominate surface features.} Cadence-only
to hand-crafted adds 0.300 AUPRC, carried by the reader-extracted signals
and structured metadata. Two of those categoricals are exposed to semantic
leakage, so the attribution of this increment is unidentified
(\Cref{sec:leakage}).

\paragraph{Surface engagement metrics add nothing.} Seven
conversation-quality features add 0.000; sixteen structured engagement
metrics are marginally negative. We read these as crude proxies for a
signal the reader already captures semantically --- an argument against
expanding surface feature sets once a semantic reader is in place.

\paragraph{Sequence dynamics subsume much of the hand-crafted signal.} The
encoder on cadence inputs alone beats the 18-feature baseline (row~5,
0.833, against row~2, 0.805); combining the two adds a further 0.088
(row~6, 0.921), so extracted content contributes residual rather than
redundant signal. Rows~2 and~5 also differ in feature count, so this
comparison is suggestive rather than clean.

\paragraph{The end-to-end neural configuration underperforms, but the comparison is not yet controlled.} Row~7 reaches 0.316 AUPRC with 98-day MAE, early-stopping
at epoch~19. This is consistent with the capacity-partition hypothesis and
with the tabular-learning literature~\cite{grinsztajn2022,shwartzziv2022}.
It does not yet establish the hypothesis, because row~7 is not
feature-matched to row~6: it sees 13 raw per-source features with no
cadence features on the survival head, against 81 for row~6. We therefore
describe the capacity partition as a hypothesis consistent with a
suggestive but confounded comparison, and we do not claim it as
established.

\subsection{Leakage audit}
\label{sec:leakage}

Ranking metrics are computed on the resolved population, whose no-skill
AUPRC is 0.185 (row~0 of \Cref{tab:ablation}); the headline 0.921 is a
five-fold lift over that baseline, not over the unconditional 4.2\% 90-day
prevalence of \Cref{sec:caseb}, which is a different population.
Point-in-time snapshot construction rules out \emph{temporal} leakage but
not \emph{semantic} leakage. Two features are semantically exposed.

\paragraph{Pipeline stage.} The categorical \texttt{stage} is a
human-maintained field that operators advance as an instance approaches
resolution. Its value at a snapshot is available at that moment, but on
late snapshots it is close to a deterministic announcement of the outcome.

\paragraph{The reader's own reason label.} The categorical
\texttt{dominant\_reason} is the reader's judgement of \emph{why this
instance resolves as it does}, extracted from source text that on late
sources frequently states the resolution outright. Used as a forecasting
feature, it may partially encode outcome information present in
late-stage evidence rather than information available for genuine
anticipation.

\paragraph{Bound from the ablation.} Both exposed features sit in the
hand-crafted block, so rows~1 and~5 carry neither. The ablation also
shows that strong ranking performance is achievable without either
exposed feature: row~5, the sequence encoder on cadence inputs with no
categoricals, reaches 0.833 AUPRC, compared with 0.805 for row~2, which
includes both. This indicates that the headline performance cannot be
attributed solely to those exposed features, although their exact
incremental contribution remains unidentified: rows~2 and~5 differ in
architecture and feature count, so this comparison does not isolate
either feature's marginal effect. The 0.088 increment from row~5 to row~6 is the gain associated with adding the hand-crafted block in the hybrid configuration; that block contains six numeric features, four unexposed categoricals, and the two exposed ones, so the contribution of the exposed pair within that increment is likewise not identified. The capacity-partition result rests on rows~2, 5 and~6. The encoder's per-source inputs contain neither exposed field.

\paragraph{What remains open.} We ran no ablation dropping either feature
and no fixed-lead-time evaluation. The $+0.300$ increment from row~1 to
row~2, the step at which the categoricals enter, is therefore unidentified
between semantic content and outcome encoding, as are the reported
operating points. Prediction~5 (\Cref{sec:predictions}) is the test. The
sequence-length trend in \Cref{tab:bylen} carries the same ambiguity:
instances with more sources are also further along.

\subsection{Scaling with sequence length}

Accuracy rises monotonically with the number of sources observed at scoring
time (\Cref{tab:bylen}): AUPRC on the success horizon climbs from 0.795 in
the one-to-two-source bin to 0.940 at thirteen or more, a gain of 0.145,
while the descriptive close-time error falls from roughly 59 to 39 days.
Two readings of this are consistent with the data and we cannot separate
them here. The sequence encoder may be extracting more from longer
histories, which is what the capacity partition predicts; or instances with
more sources may simply be closer to resolution, which is the identification
problem raised in \Cref{sec:leakage}. Worth noting either way is the
low end: even single-source instances score 0.795 against the 0.185
no-skill baseline on this population, so the model is not merely useless
until evidence accumulates.

\begin{table}[t]
\caption{Accuracy as sequential evidence accumulates. The monotone trend
is consistent with the sequence-encoder rationale, but it is not
identified: instances with more sources are also further along, so
proximity to resolution is an unmodelled competing explanation
(\Cref{sec:leakage}). Instance counts in the upper bins are small and
intervals are pending. The 1--2 bin count (2{,}830) coincides exactly with
the population used for the operating-point table, which we flag as a
probable provenance error to be checked rather than a coincidence.}
\label{tab:bylen}
\centering
\small
\begin{tabular}{lccccc}
\toprule
Sources observed & Instances & AUPRC$_{\text{s}}$ & AUPRC$_{\text{f}}$ & $F_1$ & MAE (d) \\
\midrule
1--2  & 2{,}830 & 0.795 & 0.622 & 0.699 & 59.4 \\
3--4  & 1{,}043 & 0.802 & 0.680 & 0.696 & 51.9 \\
5--7  & 571     & 0.827 & 0.697 & 0.718 & 42.5 \\
8--12 & 304     & 0.862 & 0.788 & 0.765 & 38.5 \\
13$+$ & 126     & 0.940 & 0.883 & 0.849 & 39.0 \\
\bottomrule
\end{tabular}
\end{table}

\subsection{Score stability}

On a 200-instance sample with at least five sources each, the median
within-instance range of the 90-day success probability is 0.104 and the
median standard deviation 0.030, with a P90 range of 0.860. Most instances
are stable; the tail corresponds to genuine trajectory change. A
between-source drop exceeding 0.20 is our operating threshold for flagging
review.

\subsection{Negative results}

\paragraph{Post-hoc calibration failed under temporal shift.}
Isotonic regression~\cite{zadrozny2002} fitted on the training split and
applied to test degraded AUPRC by 0.009. The mechanism is base-rate shift
across the temporal split: the calibrator learned a mapping that does not
hold in the test period.

An earlier internal explanation attributed the observed under-confidence to probability mass being distributed over seven buckets and three outcomes. The current analysis does not support that interpretation. The chained 90-day marginal is a probability and should be calibrated against realized frequencies; distributing mass across buckets does not by itself explain marginal miscalibration. The reported deciles --- decile~5 predicting
0.02 against 9\% realized --- indicate genuine miscalibration of the
marginal. The defensible conclusion is narrow: \emph{isotonic regression
fitted across a temporal shift} did not help, and we consume scores as
rankings pending a shift-aware calibration attempt.
We report no reliability diagram or expected calibration error per horizon,
and we did not retry calibration on a temporally adjacent validation slice
rather than the full training split. The negative result therefore covers
one calibration method under one fitting protocol, and should not be read
as evidence that the scores cannot be calibrated.

\paragraph{Down-weighting synthetic midpoint rows did not help.} No weight
setting improved AUPRC, suggesting midpoints supply genuine staleness
signal rather than noise.

\paragraph{Pooling beat per-segment models.} Splitting halves already-limited
data for the weakest segments ($\approx$3K training snapshots each), and the
pooled model transfers patterns from high-volume segments. Per-segment
AUPRC ranges 0.723--0.969, but the widest intervals fall on the smallest
segments (20--86 instances) and we do not interpret those as genuine
outperformance.

\subsection{Deployment}

Per source arrival the system emits horizon probabilities for each
competing outcome at 30 and 90 days, an expected days-to-resolution
estimate, and the model version. Rows are keyed by (source, instance) so
trajectory and attribution questions are answerable, with instance-level
views built on top. Separate success and failure scores are maintained
rather than one signed score, since an instance can be unlikely to resolve
either way. Raw scores are stored rather than tiers, so thresholds move
without redeployment.

\paragraph{Operating points are conditioned on resolution.}
Threshold-level precision and recall in our internal reporting are computed
over instances that resolved within the test window, excluding roughly
9{,}098 unresolved ones. Because the conditioning event is unknown at scoring time, these figures are conditional operating-point estimates rather than estimates of deployed precision and should not be quoted as headline accuracy. Against unconditional
prevalence the conditioned top-tier figures are not attainable; against the
resolved-population base rate they correspond to roughly a two-fold lift.
We have not recomputed them on the full scored population, so we quote no
operating point as a headline figure and state the conditioning wherever
one appears.

\section{Transfer}
\label{sec:transfer}

The contract is domain-neutral by construction: it constrains what a reader
emits, not what a reader is. \Cref{tab:transfer} instantiates it along the
four axes of \Cref{sec:iface} for five domains outside ours. Each was chosen because it already runs an
aggregation rule of the form \Cref{cor:general} describes.

\begin{table}[t]
\caption{Instantiating the contract along the four axes of
\Cref{sec:iface}. Each domain aggregates multiple partially
reliable signals under an obligation to justify the conclusion, and in
several of them the conventional method is to sum scores against a fixed
cut --- the form \Cref{cor:general} shows to be a drifting posterior
threshold.}
\label{tab:transfer}
\centering
\footnotesize
\begin{tabular}{@{}p{2.35cm}p{2.5cm}p{2.5cm}p{2.5cm}p{2.2cm}@{}}
\toprule
Domain & Sources & Reliability bucket from & Dependence blocks & Regime \\
\midrule
Clinical decision support & Notes, labs, imaging reports, prior visits &
Modality and assay precision; report completeness &
Same encounter; same instrument; copied-forward text & Both \\[0.6ex]
Security triage & Detector alerts, host telemetry, threat intel &
Detector historical FPR; alert enrichment depth &
Shared sensor, shared rule lineage & Unresolved \\[0.6ex]
Financial analysis & Filings, transcripts, analyst notes &
Document class and recency; quantitative grounding &
Same issuer-period; syndicated language & Both \\[0.6ex]
Evidence synthesis & Published studies of varying quality &
Design and risk-of-bias tier; sample size &
Same cohort; same research group & Resolved \\[0.6ex]
Industrial telemetry & Per-signal detectors over a shared feature build &
Signal type; data-quality flag &
Shared upstream pipeline or reference series & Unresolved \\
\bottomrule
\end{tabular}
\end{table}

Three things transfer without re-derivation. The contract transfers because
it is a schema. The pooling arithmetic transfers because
\Cref{eq:pool} is Bayes' rule under a stated independence structure, not a
fitted model. And the drift diagnostic transfers because it needs only what
these systems already log: per-instance channel counts, which channels
fired, and outcomes.

Three things do not, and treating them as if they did is the likely failure
mode of adopting this work. Reliability rates $\alpha_{b,y},\beta_{b,y}$ are
properties of a reader on a corpus and must be re-estimated whenever either
changes, including on prompt revisions. The blocking rule is domain
knowledge: what counts as shared upstream origin is a fact about the data
pipeline, not a modelling choice. Thresholds encode the local cost of a
wrong assertion against the cost of no answer, and no default is
defensible across domains. The generalization claim is therefore precise:
the decomposition and its arithmetic transfer; the constants never do.

Two properties of the domains in \Cref{tab:transfer} deserve note, because
they are what make the framework worth the trouble there rather than
merely applicable. Each has an obligation to show its basis --- clinical,
regulatory, or evidentiary --- so provenance as a structural output is worth
more than post-hoc explanation. And in each, no answer is a legitimate and
sometimes preferred output, which is exactly what a generative reader
cannot reliably produce and a rule over calibrated evidence can.

\section{Discussion}
\label{sec:discussion}

The two instantiations make one point at two granularities. In the
resolved regime, splitting monolithic reading into per-source
interpretation plus explicit aggregation improved precision and made
provenance structural. In the unresolved regime, splitting a monolithic
learning problem into representation learning plus censored timing
estimation coincides with a large margin, on a comparison we have not
controlled. The recurring structure is that the harder sub-problem in each
case has a specialized solver --- calibrated arithmetic in one, a
censoring-aware boosted tree in the other --- and that giving it to a
general-purpose learner along with everything else wastes it.

We should be precise about how much the two studies share. The second
consumes the \emph{reader} from Instantiation~I as three categorical features,
but not the aggregator: no LLR pooling, no abstention, no dependence
discount enters the forecasting model. The unification is therefore
architectural rather than demonstrated. The experiment that would
demonstrate it would be to supply the pooled posterior $\hat{p}(y)$ from
\Cref{eq:pool}, with its abstention flag, as a feature to the forecasting
model and to measure whether calibrated evidence outperforms raw extracted
categories. Absent that, this paper contains two adjacent contributions
sharing a corpus and a reader, and we would rather say so than assert a
unification.

The architecture separates two uncertainties that one-shot systems
conflate. \emph{Source uncertainty} --- one ambiguous observation --- surfaces
as a low reliability bucket. \emph{Aggregate uncertainty} --- plausible
observations conflicting --- surfaces as a posterior in the abstention band.
A system can abstain at either level and report which.

\paragraph{Why not conformal prediction?} Conformal
methods~\cite{angelopoulos2021} would give distribution-free coverage
guarantees on our label sets, which our threshold pair does not. We did not
adopt them because exchangeability fails across our temporal split --- the
same shift that defeated isotonic calibration --- and because conformal sets
do not by themselves supply the per-source attribution that motivates the
architecture. Time-series conformal variants are the natural next step and
we regard the absence as a limitation rather than a design choice.

A last observation about why the partition keeps paying. Each half, once
separated, can be improved by someone who does not understand the other.
The reader can be swapped for a better model, or for a human, without
touching the arithmetic; the arithmetic can be corrected --- as
\Cref{sec:agg} argues it should be --- without re-reading a single source.
Monolithic systems do not have that property, and it is worth more over a
system's life than any single margin reported here.

\section{Falsifiable predictions}
\label{sec:predictions}

A framework is more useful when it makes claims that could turn out to be
false. We state five such claims below. Each is paired with the experiment
that would discriminate it, and each can be run on data already collected or
with one additional training run.

\begin{enumerate}[leftmargin=*]
\item \textbf{Count-scale drift is practically material, not merely
formal.} Under the deployed vote rule, precision should decline in source
count relative to per-bin prevalence; under posterior thresholding it
should not. Stratify the evaluation instances by source count and compare
per-bin precision against per-bin prevalence. A flat profile would leave
\Cref{prop:scale} valid but practically irrelevant at our distribution of
$n$, and we would report it as such.

\item \textbf{Calibrated pooling beats vote counting on the same
tuples.} Re-aggregate the stored evidence objects under majority vote,
weighted vote, weighted vote with bonus, noisy-OR, unsupervised
Dawid--Skene, and \Cref{eq:pool}. No new inference is required. If pooling
does not improve macro $F_1$ or selective risk, the analytical argument
survives but the engineering recommendation does not.

\item \textbf{Abstention selects hard cases rather than discarding at
random.} On the unfiltered pool, with ``no determinable reason'' scored as a
class, the risk--coverage curve should dominate a matched-coverage random
abstention control. This is the prediction we most want tested, because the
current evaluation population excludes the stratum it concerns
(\Cref{sec:eligibility}).

\item \textbf{The capacity partition, not the feature set, drives the
forecasting margin.} Train the end-to-end neural model on the hybrid's full
feature set. If it closes most of the 0.921 versus 0.316 gap, the margin was
features and the partition claim fails.

\item \textbf{The reader-extracted increment is prediction, not outcome
encoding.} Re-run without pipeline stage, without the reader-extracted
categoricals, and at fixed lead times of 30, 60, and 90 days. The $+0.300$
increment of \Cref{tab:ablation} should survive at 60-day lead if it
reflects semantic content, and collapse if it reflects outcome
announcements (\Cref{sec:leakage}).
\end{enumerate}

The framework also makes one prediction we consider its most
consequential and have no way to test here: that supplying the pooled
posterior with its abstention flag as a feature to the forecasting model
outperforms supplying raw extracted categories. That experiment is what
would make the two instantiations one system rather than two applications
of one principle.

\section{Limitations}
\label{sec:limits}

The main empirical limitation concerns evaluation coverage in Instantiation~I rather than any single number. Eligibility filtering for Instantiation~I
removed every instance whose gold label is ``not determinable'', reducing
1{,}000 labelled instances to 162 and leaving 33 for evaluation
(\Cref{sec:eligibility}); the present evaluation therefore does not measure performance on the cases where abstaining is the correct output. At 33 instances no ordering among the rows of
\Cref{tab:reason-results} is statistically supported, the oracle anomaly
included. And the aggregation argument of \Cref{sec:agg} is analytical
throughout: our results come from the deployed weighted vote, no experiment
here varies the combination rule, so we characterize the defect analytically while empirical evaluation of the proposed repair remains future work.

In Instantiation~II the credit assigned to reader-extracted content is not
identified. Pipeline stage and the reader's own reason label may encode the
outcome rather than predict it (\Cref{sec:leakage}), which exposes the
$+0.300$ increment and the operating points. Strong ranking performance remains without either exposed feature: row~5 of \Cref{tab:ablation} carries neither and reaches 0.833, while row~2 carries both and reaches 0.805. Because those rows differ in architecture and feature count, this comparison does not isolate either feature's marginal contribution; it does, however, show that the ranking result does not depend solely on the exposed pair. The capacity-partition result is carried by rows~2, 5 and~6 rather than by a single exposed feature. Row~7 of \Cref{tab:ablation} is separately not
feature-matched to row~6, so that arm of the capacity-partition argument
rests on a confounded comparison. The no-skill AUPRC of row~0 exceeds the
unconditional 90-day prevalence quoted in \Cref{sec:caseb} because the two
are computed on different populations, the former conditioning on
resolution; we have not reconciled the counts exactly, but the gap is one
of conditioning rather than an inconsistency, and $\Delta$ may be read as
lift over chance within the table.

Several evaluation choices affect how the reported numbers should be interpreted. Threshold precision and recall are computed over resolved instances only, so they are conditional on resolution and are not directly comparable to deployed precision over all scored instances; ranking metrics over the scored evaluation population do not use those operating-point thresholds, which is why AUPRC is primary.
Restricting the test set to instances resolving before the data cutoff
over-represents fast-resolving ones, biasing MAE downward and distorting the
horizon marginals --- a second reason we treat MAE as descriptive. Multiple
snapshots per instance make row-level variance estimates optimistic, so
intervals must resample instances rather than rows. We report no calibration
for either study: the deployed rule does not pool at all,
\Cref{sec:confusion} shows what a single-label reader would require, and in
Instantiation~II we report ranking without a proper scoring rule over time,
so neither system emits a number we would defend as a probability.
Consistent with that, summing per-instance probabilities into a quarterly
aggregate produced absolute percentage errors ranging from single digits in
the best quarter to well above 100\% in two others; the model ranks, it does
not estimate portfolios.

Two limitations belong to the method rather than to our data. Calibrated
pooling corrects for known per-bucket error rates, but it cannot correct a
reader whose errors correlate with the label, since such error is absorbed
into $\alpha_{b,y}$ and reappears as signal. The rates themselves come from
finite calibration data, and for rare labels the log-ratio is unstable as
$\beta \to 0$ while clipping it biases toward weaker evidence. Finally,
everything here rests on one corpus in one domain: the mechanisms are
domain-neutral, but the weights, thresholds, and capacity partition are
fitted to this regime, and we report no human-in-the-loop outcome, so
``decision support'' names the system's role rather than a measured effect
on anyone's decisions.

\section{Broader impacts}

The deployment writes to records about identifiable employees' work and
surfaces risk flags on the deals they own, so it sits inside a performance
management context whether or not it is used for one. Three properties
matter and two are already enforced. The system writes only into empty
fields and never overwrites a human entry; every written value carries a
provenance flag marking it machine-generated; and every assertion links to
the sources supporting it, so a person can contest a specific inference
rather than a score. The property not yet established is the one that
matters most: we have run no study of how these outputs affect the
decisions people make, so we cannot claim the provenance is used, nor that
flags are read as evidence rather than as verdicts. Automation bias would
show up as exactly the pattern our abstention design is meant to prevent.

The corpus consists of recorded human conversations processed under the
recording consent and retention policy in force for that data; the reader
sees transcript text and no participant identifiers beyond those already in
the transcript, and no personal data is released with this work. We note
that a system which explains \emph{why} a deal was lost is also a system
that attributes causes to named individuals' conduct; label sets should be
audited for reasons that are effectively performance judgements.

\section{Conclusion}

We argued that evidence organization is an architectural decision rather
than a prompt-construction detail, and that the decision to make is where
to cut and what contract to place at the cut. Placing a four-field evidence
tuple there determines both halves: reliability becomes estimable,
provenance becomes structural, failures localize, and abstention becomes
expressible. We then characterized the failure mode in how such
systems usually combine. Count-scale drift makes a vote threshold into a
posterior threshold that slides linearly in source count, at a rate set by
parameters that are rarely estimated. We stated the result generally enough to
cover a class of rules beyond language models, and showed that heterogeneous
reliabilities make matters worse still, producing a different ordering rather
than a drifting one. We then gave the arithmetic that addresses both. We then instantiated the principle
on one corpus in both of its epistemic regimes, where it pays at two
different granularities: over reading before resolution is known, and over
learning capacity when it is not.

The constants do not transfer and we have said which. What transfers is the
cut, the contract, the pooling arithmetic, and a diagnostic computable on
logs that most such systems already keep. Systems that must be inspected
and contested need conclusions that arrive with their supporting evidence,
its calibrated strength, and the standing option of no conclusion at all;
that is a property of an interface, and interfaces are designed, not
prompted.

\end{document}